\documentclass{article}

\usepackage{microtype}
\usepackage{graphicx}
\usepackage{subcaption}
\usepackage{booktabs} 
\usepackage{listings}
\usepackage{hyperref}

\usepackage[preprint]{icml2026}

\usepackage{amsmath}
\usepackage{amssymb}
\usepackage{mathtools}
\usepackage{amsthm}

\usepackage[capitalize,noabbrev]{cleveref}

\theoremstyle{plain}
\newtheorem{theorem}{Theorem}[section]

\newtheorem{lemma}[theorem]{Lemma}
\newtheorem{corollary}[theorem]{Corollary}
\theoremstyle{definition}
\newtheorem{definition}[theorem]{Definition}

\theoremstyle{remark}
\newtheorem{remark}[theorem]{Remark}

\newcommand{\E}{\mathbb{E}}
\newcommand{\Prob}{\mathbb{P}}
\newcommand{\CR}{\mathrm{CR}}
\newcommand{\OPT}{\mathrm{OPT}}

\newcommand{\cost}{\mathsf{cost}}
\usepackage{kotex}

\usepackage{natbib}

\usepackage[textsize=tiny]{todonotes}

\icmltitlerunning{Ski Rental Problem with Distributional Prediction}

\begin{document}

\twocolumn[
  \icmltitle{Robust and Consistent Ski Rental with Distributional Advice}



  \icmlsetsymbol{equal}{*}

  \begin{icmlauthorlist}
    \icmlauthor{Jihwan Kim}{snu}
\icmlauthor{Chenglin Fan}{snu}
  \end{icmlauthorlist}

 \icmlaffiliation{snu}{ Seoul National University, Seoul 08826, South Korea}

\icmlcorrespondingauthor{Chenglin Fan}{fanchenglin@gmail.com}

  \icmlkeywords{Machine Learning, ICML}

  \vskip 0.3in
]



\printAffiliationsAndNotice{}  

\begin{abstract}
The ski rental problem is a canonical model for online decision-making under uncertainty, capturing the fundamental trade-off between repeated rental costs and a one-time purchase. While classical algorithms focus on worst-case competitive ratios and recent ``learning-augmented'' methods leverage point-estimate predictions, neither approach fully exploits the richness of full distributional predictions while maintaining rigorous robustness guarantees. We address this gap by establishing a systematic framework that integrates distributional advice of unknown quality into both deterministic and randomized algorithms.

For the \emph{deterministic setting}, we formalize the problem under perfect distributional prediction and derive an efficient algorithm to compute the optimal threshold-buy day. 
We provide a rigorous performance analysis, identifying sufficient conditions on the predicted distribution under which the expected competitive ratio (ECR) matches the classic optimal randomized bound. 
To handle imperfect predictions, we propose the \emph{Clamp Policy}, which restricts the buying threshold to a safe range controlled by a tunable parameter. 
We show that this policy is both \emph{robust}, maintaining good performance even with large prediction errors, and \emph{consistent}, approaching the optimal performance as predictions become accurate.

For the \emph{randomized setting}, we characterize the stopping distribution via a  \emph{Water-Filling Algorithm}, which optimizes expected cost while strictly satisfying robustness constraints. 
Experimental results across diverse distributions (Gaussian, geometric, and bi-modal) demonstrate that our framework improves consistency by significantly over existing point-prediction baselines while maintaining comparable robustness.

\end{abstract}

\section{Introduction}
The ski rental problem stands as a paradigmatic example of online decision-making under uncertainty, encapsulating the fundamental trade-off between short-term incremental costs (renting) and long-term fixed investments (buying). Since its formalization, it has served as a cornerstone for analyzing online algorithms, with applications spanning resource allocation, subscription services, equipment leasing, and cloud computing, where decisions must be made sequentially without full knowledge of future demand or usage duration. In the classic setting, the optimal deterministic strategy achieves a competitive ratio of 2~\citep{karlin1988competitive}, while randomized algorithms improve this bound to \(\frac{e}{e-1} \approx 1.58\) \citep{karlin1990competitive}. However, these traditional approaches rely on worst-case analysis, ignoring potentially valuable information about the distribution of the usage duration (i.e., the number of skiing days). With the advent of machine learning and predictive modeling, leveraging distributional predictions of future outcomes has emerged as a powerful avenue to narrow the gap between online and offline performance.

While recent work has explored integrating point predictions into ski rental and related online problems \citep{purohit2018improving}, point predictions fail to capture the uncertainty inherent in real-world scenarios. A single predicted value cannot account for the variability of possible usage durations, leading to suboptimal decisions when the true duration deviates significantly from the prediction. In contrast, distributional predictions—by quantifying the probability of all possible usage durations—enable more nuanced and robust decision-making. For instance, knowing that the probability of skiing for 1 day is 0.8 and for 5 days is 0.2 (as in our concrete example) allows us to compute a threshold that balances the risk of over-renting and over-buying, which a point prediction (e.g., the expected value of 1.8 days) cannot achieve. This observation motivates our focus on distributional prediction for the ski rental problem: we aim to design policies that explicitly leverage the full distribution of usage duration to minimize expected cost, while providing theoretical guarantees on performance even when predictions are imperfect.

\subsection{Contributions and Overview}
This paper makes several core contributions to the study of the ski rental problem under distributional prediction:

First, we formalize the distributional ski rental setting with perfect prediction, focusing on threshold-based policies (rent for \(t-1\) days, buy on day \(t\)). We derive the expected cost function for such policies, propose an efficient \(O(n)\) algorithm to compute the optimal threshold \(t^*\) (given a bounded support of the duration distribution), and analyze the expected competitive ratio (ECR) relative to the offline optimum (which knows the true duration in advance). We establish exact ECR expressions and distribution-dependent upper bounds for two key cases (\(t \leq b\) and \(t > b\), where \(b\) is the buy cost) and identify sufficient conditions for the ECR to match or over the optimal randomized bound of \(\frac{e}{e-1}\). A concrete example demonstrates that our distributional approach outperforms point prediction, achieving a lower expected cost by adapting to the structure of the duration distribution.

Second, we address the critical challenge of prediction uncertainty. In practice, predicted distributions (\(\hat{p}\)) often deviate from the true distribution (\(p\)), and policies that over-rely on imperfect predictions risk catastrophic performance. To mitigate this, we introduce a \textbf{clamp policy} that restricts the optimal threshold from the predicted distribution (\(t_{\hat{p}}^*\)) to an interval \([\lceil\lambda b\rceil, \lfloor b/\lambda\rfloor]\) (for \(\lambda \in (0,1)\)). We prove that this policy achieves two desirable properties: (1) \textbf{robustness}: the ECR is bounded by \(1 + \frac{1}{\lambda} - \frac{1}{b}\) regardless of prediction error (measured via 1-Wasserstein distance), ensuring performance does not degrade arbitrarily with prediction quality; and (2) \textbf{consistency}: as the prediction error vanishes, the ECR approaches the optimal value for the true distribution, leveraging accurate distributional information when available. We further extend these results to the total variation distance.

Third, we generalize our analysis to randomized algorithms, which are known to outperform deterministic strategies in worst-case ski rental. We formulate necessary robustness constraints for randomized policies, deriving conditions on the cumulative distribution function (CDF) of the duration that ensure a bounded ECR. For monotonic cost functions, we show that the optimal CDF follows an exponential form.
We then propose an iterative algorithm with binary search, inspired by a "water filling" interpretation of resource allocation, ensuring the robustness constraints are satisfied while minimizing expected cost. 

Experimental Validation: We evaluate our approach across a range of representative predicted distributions—including uniform, discretized Gaussian, truncated geometric, and bi-modal—comparing against corrected baselines adapted from \citep{purohit2018improving} for distributional inputs. Our method consistently improves consistency by roughly 5–20\% over the baselines, with the largest gains observed on bi-modal and geometric distributions, where uncertainty is most impactful. Importantly, these improvements are achieved while maintaining the same robustness guarantees. These results demonstrate that our framework effectively balances robustness and consistency across diverse distribution types.
\subsection{Related Work}
The ski rental problem is a canonical model in online competitive analysis, capturing the fundamental rent-or-buy dilemma. Early work established the optimal deterministic $2$-competitive break-even strategy \citep{karlin1988competitive} and the optimal randomized $\frac{e}{e-1}$-competitive algorithm \citep{karlin1990competitive}, laying the foundation for subsequent work on online algorithms, and extended to related settings such as dynamic TCP acknowledgment~\citep{karlin2003dynamic}.
These guarantees were later derived and unified via primal--dual techniques \citep{buchbinder2009primaldual}.

Learning-augmented algorithms have achieved notable success in online optimization, starting with the seminal work of Lykouris and Vassilvitskii~\citep{lykouris2018competitive} and follow-ups~\citep{purohit2018improving, rohatgi2020near, azar2021flow, bansal2022learning, lindermayr2022permutation, angelopoulos2020online}. Recent advances extend these techniques to offline tasks such as matching, clustering, and sorting~\citep{dinitz2021faster, ergun2021learning, bai2023sorting, lu2021generalized}, as well as to data structures~\citep{lin2022learning}, robust predictors~\citep{anand2022online, antoniadis2023mixing}, binary search with distributional predictions~\citep{dinitz2024binary}, warm-start strategies~\citep{blum2025competitive}, dynamic graphs~\citep{brand2024dynamic}, and more.
Learning-augmented algorithms have been studied for the classical rent-or-buy problem using point prediction. ~\citet{purohit2018improving} gave both deterministic and randomized algorithms, later shown to be optimal by ~\citet{angelopoulos2020online}, ~\citet{bamas2020primal}, and ~\citet{wei2020optimal}. The classical ski rental problem has been extended to settings with multiple predictions and multiple shops~\citep{GollapudiP19,angelopoulos2020online,bamas2020primal,wei2020optimal,banerjee2020improving,wang2020multishop,shin2023multioption, li2025combinatorial}, capturing richer online decision-making scenarios.  studied a ski-rental variant with the number of days drawn from a distribution with known moments.

\paragraph{Distributional Predictions.}The integration of distributional information into online decision-making has evolved significantly over the past decade. Early work by \citet{khanafer2013constrained} incorporated limited statistical information by assuming knowledge of only low-order moments (such as the mean or variance) of the arrival distribution. Using a game-theoretic, zero-sum framework, they derived robust randomized algorithms; however, their approach is restricted to coarse statistical inputs and cannot leverage the detailed, full-distribution predictions generated by modern machine learning models. More recently, a growing body of literature has begun investigating true "distributional predictions," where an advisor provides a full probability distribution rather than a single point estimate. This paradigm was initiated by \citet{dinitz2024binary} for the binary search problem and has since been expanded to other fundamental online domains, such as metric matching \cite{canonne2025friends} and online bidding \cite{angelopoulos2025learning}.Within the specific context of the ski rental problem, other models leveraging distributions exist but operate under fundamentally different threat models and objectives. For instance, the sampling-based stochastic model of \citet{diakonikolas2021renting} assumes drawing i.i.d. samples from an unknown but honest underlying distribution to minimize expected cost. Meanwhile, \citet{KangParkFan2025BayesianSkiRental} adopt a Bayesian framework with full discrete priors, updating posterior beliefs over time to maximize expected utility. 
A closely related setting was recently explored by \citet{besbes2025}, who studied the ski rental problem with full distributional predictions. However, their analysis assumes that the algorithm knows an upper bound on the distance between the predicted and true distributions in advance, leaving the setting with an unknown error bound as an open question. Concurrent to our work, \citet{cui2026ski} addressed this  question by providing an algorithm that achieves bounded expected additive loss without knowing the prediction error. While these methods allow for rich uncertainty modeling, they assume the ground truth is drawn from a true underlying distribution or prior, and they do not explicitly optimize or guarantee worst-case competitive ratios against an adversarial stopping time.

Our work bridges this gap by providing a systematic treatment of the canonical ski rental problem under an untrusted-advisor framework with full distributional advice. Unlike settings that assume a true underlying distribution, we evaluate our algorithm's performance against the actual realized stopping time, ensuring that the provided distribution can originate from a potentially inaccurate or maliciously misleading source. This allows us to explicitly optimize the expected competitive ratio against realized worst-case instances, thereby providing rigorous robustness guarantees."


\subsection{Organization}
The paper systematically develops a framework for the ski rental problem, beginning with theoretical foundations and progressing to practical validation. 
Section~2 formalizes the problem, introduces key notation, and establishes baseline cost models. 
Section~3 analyzes the perfect prediction scenario, deriving optimal threshold policies and exact competitive ratio expressions, illustrating how distributional predictions improve over point predictions. 
Section~4 introduces the clamp policy to handle prediction errors, proving robustness and consistency under various distance metrics. 
Section~5 generalizes the framework to randomized algorithms, deriving optimal distributions and proposing an iterative optimization procedure. 
Finally, Section~6 presents experimental validation across diverse distributions, showing that our approach consistently outperforms adapted baselines, particularly on bi-modal and geometric cases.





\section{Preliminaries and Problem Setup}

\subsection{The Ski Rental Problem}

The classical ski rental problem models an online decision-making task with an unknown time horizon. An agent repeatedly decides whether to rent skis at a cost of $1$ per day or buy skis once at a fixed cost $b>1$. The total number of skiing days $D \in \mathbb{N}$ is unknown to the agent in advance. If the agent buys skis on day $t$, then it incurs a total cost of $(t-1)+b$; if it never buys, its cost is $D$.

An online algorithm is evaluated via its competitive ratio (CR), defined as the worst-case ratio between the algorithm’s cost and that of an optimal offline strategy that knows $D$ in advance. The offline optimum incurs cost $
\OPT(D) = \min\{D, b\}.
$
\subsection{Distributional Predictions}

We consider a setting in which the algorithm is given a distributional prediction over the unknown horizon $D$. Specifically, before any decisions are made, the algorithm observes a distribution $\mathcal{D}$ over $D$. The distribution may be inaccurate or adversarially chosen, but it provides probabilistic information about the future.

An algorithm may leverage $\mathcal{D}$ to randomize its buying time or to adaptively decide whether to buy. Importantly, we distinguish between consistency and robustness. Consistency refers to performance guarantees when $D \sim \mathcal{D}$ is indeed drawn from the predicted distribution, while robustness refers to worst-case guarantees that hold for any realization of $D$, regardless of $\mathcal{D}$.

\subsection{Algorithms and Randomization}

A (possibly randomized) online algorithm $\mathcal{A}$ selects a buying time $\tau \in \{1,2,\dots\} \cup \{\infty\}$, where $\tau = \infty$ corresponds to never buying. The cost incurred by $\mathcal{A}$ on horizon $D$ is
\[
\cost_{\mathcal{A}}(D) =
\begin{cases}
D, & \text{if } \tau > D, \\
(\tau - 1) + b, & \text{if } \tau \le D.
\end{cases}
\]

For randomized algorithms, the cost is defined in expectation over the internal randomness of $\mathcal{A}$.

\subsection{Performance Metrics}

Given a distribution $\mathcal{D}$ over $D$, we define the expected competitive ratio (ECR) under $\mathcal{D}$ as the ratio of the expected cost of the algorithm to the expected cost of the offline optimum:
\[
\CR_{\mathcal{A}}(\mathcal{D})
=
\frac{\mathbb{E}_{D \sim \mathcal{D}} \bigl[ \mathbb{E}[\text{cost}_{\mathcal{A}}(D)] \bigr]}{\mathbb{E}_{D \sim \mathcal{D}} [\OPT(D)]}.
\]
We also define the worst-case competitive ratio as
\[
\sup_{D \ge 1}
\frac{\mathbb{E}[\text{cost}_{\mathcal{A}}(D)]}{\OPT(D)}.
\]

An algorithm is said to achieve $(\alpha, \beta)$-performance if it is $\alpha$-consistent (achieving $\CR_{\mathcal{A}}(\mathcal{D}) \le \alpha$ when the prediction is correct) and $\beta$-robust in the worst case.
\subsection{Notation}

Throughout the paper, we use $\Prob[\cdot]$ to denote probability, $\mathbb{E}[\cdot]$ for expectation, and $\OPT$ for the offline optimal cost. All logarithms are natural unless stated otherwise.

\section{Perfect Prediction Setting}

\subsection{Algorithm}

Let \(D \in \mathbb{N}\) be a random variable denoting the total number of skiing days, and let its distribution be
\[
p(d) = \Pr[D = d].
\]
The per-day rental cost is normalized to \(1\), and the purchase cost is \(b\). We consider \emph{threshold policies} that rent skis for the first \(t-1\) days and buy on day \(t\).
Let $\mathcal A_t$ denote the threshold algorithm that buys on day $t$.

The corresponding expected cost  is
\begin{align*}
f(t) &= \mathbb{E}_{D \sim p}\bigl[\cost_{\mathcal A_t}(D)\bigr]
\\ &= \sum_{d=1}^{t-1} p(d)\, d
\;+\;
\sum_{d=t}^{\infty} p(d)\,\bigl(b + t - 1\bigr). 
\end{align*}

Our objective is to determine
\[
t^* = \arg\min_{t \in \mathbb{N} \cup \{\infty\}} f(t),
\]
and to purchase ski on day \(t^*\).

\noindent
Assuming a known upper bound \texttt{max\_day} on the support of \(D\), a simple \(O(n)\)-time routine for computing \(t^*\) is given below.

\begin{algorithm}[H]
\caption{Optimal Threshold for Ski Rental with Perfect Prediction}
\label{alg:optimal_ski_rental_fast}
\begin{algorithmic}[1]
\REQUIRE Probability distribution $p[1.. \texttt{max\_day}]$, buy cost $b$, upper bound \texttt{max\_day}
\ENSURE Optimal threshold $t^*$ minimizing expected cost $f(t)$

\STATE Initialize $\text{min\_cost} \gets \infty$, $t^* \gets 1$
\STATE Precompute tail probabilities $S(t) = \sum_{d=t}^{\infty} p[d]$:
\STATE $\text{tail}[\texttt{max\_day}+1] \gets 0$
\STATE \textbf{for} $d = \texttt{max\_day} \dots 1$ \textbf{do}
\STATE \quad $\text{tail}[d] \gets \text{tail}[d+1] + p[d]$
\STATE \textbf{end for}

\STATE Initialize $\text{rent\_cost} \gets 0$ \COMMENT{This tracks $\sum_{d=1}^{t-1} p[d] \cdot d$}
\STATE \textbf{for} $t = 1, \dots, \texttt{max\_day}+1$ \textbf{do}
\STATE \quad $\text{buy\_cost} \gets \text{tail}[t] \cdot (b + t - 1)$
\STATE \quad $f_t \gets \text{rent\_cost} + \text{buy\_cost}$
\STATE \quad \textbf{if} $f_t < \text{min\_cost}$ \textbf{then}
\STATE \quad\quad $\text{min\_cost} \gets f_t$
\STATE \quad\quad $t^* \gets t$
\STATE \quad \textbf{end if}
\STATE \quad \textbf{if} $t \le \texttt{max\_day}$ \textbf{then}
\STATE \quad\quad $\text{rent\_cost} \gets \text{rent\_cost} + p[t] \cdot t$
\STATE \quad \textbf{end if}
\STATE \textbf{end for}
\STATE \textbf{return} $t^*$
\end{algorithmic}
\end{algorithm}

\subsection{A Concrete Example with \(1 < t^* < \infty\)}

This example illustrates a case where our setting provides an advantage compared to a point prediction with perfect information.
For the point prediction, $t^*$ is $1$ or $\infty$.
We introduce a case where $1 < t^* < \infty$ holds, thereby having an advantage compared to point prediction.

Assume
\[
  p(d)=
  \begin{cases}
    0.8, & d=1,\\
    0.2, & d=5,\\
    0,   & \text{otherwise},
  \end{cases}
  \qquad
  b = 3.
\]
Then, for \(t=1,2,\dots\), we compute
\[
  f(t)
  = \sum_{d=1}^{t-1} p(d)\,d
  + \sum_{d=t}^{\infty} p(d)\,(b + t - 1).
\]
In particular:

\begin{table}[ht]
\centering
\small 
\resizebox{\columnwidth}{!}{%
\begin{tabular}{ccccc}
\toprule
\(t\) & \(\sum_{d< t}p(d)\,d\) & \(\sum_{d\ge t}p(d)\) 
      & \(b+t-1\) & \(f(t)\) \\
\midrule
1 & 0                          & 1.0 & 3 & 3.0 \\
2 & \(0.8\cdot 1=0.8\)        & 0.2 & 4 & 1.6 \\
3 & \(0.8\cdot 1=0.8\)        & 0.2 & 5 & 1.8 \\
4 & \(0.8\cdot 1=0.8\)        & 0.2 & 6 & 2.0 \\
5 & \(0.8\cdot 1=0.8\)        & 0.2 & 7 & 2.2 \\
\(\ge 6\) & \(0.8\cdot 1 + 0.2\cdot 5 = 1.8\) & 0 & — & 1.8 \\
\bottomrule
\end{tabular}%
}
\caption{Example computation of \(f(t)\) for discrete distribution.}
\label{tab:ft_example}
\end{table}

Hence, the unique minimizer is
\[
  \boxed{t^* = 2, \quad 1 < t^* < \infty.}
\]

\subsection{Performance Analysis}

We define the survival function $S(t) := \sum_{d \ge t} p(d)$ and the expected offline optimum:
\[
\OPT(p) = \mathbb{E}_{D \sim p}[\OPT(D)] = \sum_{d < b} p(d)d + b \cdot S(b).
\]
Under the perfect prediction setting, where the algorithm knows $p$ and selects the optimal threshold $t^*$, we denote the resulting threshold policy as $\mathcal{A}_{t^*}$. We abbreviate its expected competitive ratio $\CR_{\mathcal{A}_{t^*}}(p)$ as:
\[
\CR(p) = \frac{f(t^*)}{\OPT(p)}.
\]
Our analysis distinguishes two regimes depending on whether $t^*$ lies before or after the buy cost $b$.
\begin{theorem}[Competitive Ratio for $t^* \le b$]\label{thm:31}
If the optimal threshold satisfies $t^* \le b$, the 
$\CR(p)$ is bounded by:
\[
\CR(p) \le 1 + \frac{(b-1)r - (b - t^*)}{t^* r + (b - t^*)},
\quad \text{where } r = \frac{S(t^*)}{S(b)} \ge 1.
\]
\end{theorem}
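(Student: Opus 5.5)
The plan is to show that the right-hand side simplifies to a clean ratio and then compare $f(t^*)$ and $\OPT(p)$ term by term. Write $t=t^*$. Putting the bound over a common denominator gives
\[
1 + \frac{(b-1)r-(b-t)}{tr+(b-t)} = \frac{(b+t-1)\,r}{tr+(b-t)} = \frac{(b+t-1)S(t)}{tS(t)+(b-t)S(b)},
\]
where the last step multiplies numerator and denominator by $S(b)$. So it suffices to prove
\[
\frac{f(t)}{\OPT(p)} \le \frac{X}{Y}, \qquad X := (b+t-1)S(t),\quad Y := tS(t)+(b-t)S(b).
\]
Optimality of $t^*$ is not needed; the bound holds for any threshold $t\le b$.

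\textbf{Step 1 (split both costs).} Let $A := \sum_{d<t} p(d)\,d \ge 0$. By definition, $f(t) = A + X$. For the offline optimum I split $\sum_{d<b}$ at $t$, which is valid since $t\le b$:
\[
\OPT(p) = A + \sum_{t\le d<b} p(d)\,d + b\,S(b).
\]
Each $d$ in the middle sum satisfies $d\ge t$, and $\sum_{t\le d<b} p(d) = S(t)-S(b)$. Hence
\[
\OPT(p) \ge A + t\bigl(S(t)-S(b)\bigr) + bS(b) = A + Y.
\]
This also holds when $b$ is not an integer, reading $S(b)$ as $\sum_{d\ge b}p(d)$.

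\textbf{Step 2 (compare $X$ and $Y$).} Since $S(b)\le S(t)$ and $b-t\ge 0$, we get $Y \le tS(t)+(b-t)S(t) = bS(t) \le (b+t-1)S(t) = X$, using $t\ge 1$.

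\textbf{Step 3 (conclude).} We now have $f(t)/\OPT(p) \le (A+X)/(A+Y)$. The map $A\mapsto (A+X)/(A+Y)$ is nonincreasing in $A\ge 0$ whenever $X\ge Y>0$. Therefore $(A+X)/(A+Y)\le X/Y$, which is exactly the claimed bound.

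\textbf{Obstacle.} The only delicate point is degenerate cases, not the inequality chain. We need $S(b)>0$ for $r$ to be defined, and then $Y>0$ automatically. The fact $r\ge 1$ follows from $S$ being nonincreasing. If $S(b)=0$, the statement is vacuous or interpreted as $r=\infty$. In that case I would note that the limiting bound $(b+t-1)/t$ still follows from the same argument with $Y=tS(t)$, provided $S(t)>0$. If instead $S(t)=0$, then $f(t)=A=\OPT(p)$ and the ratio is $1$.
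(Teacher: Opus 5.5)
Your proof is correct and is essentially the paper's argument. The paper writes $\CR(p)=1+(f-\OPT)/\OPT$, bounds the excess $f-\OPT$ above by $(b-1)(S(t^*)-S(b))+(t^*-1)S(b)$, which is exactly your $X-Y$, and bounds $\OPT$ below by your $Y$ after discarding $\sum_{d<t^*}p(d)d$; this is your step $(A+X)/(A+Y)\le X/Y$ in different form, and your Step 2 ($X\ge Y$) makes explicit the nonnegativity of that numerator, which the paper relies on without stating when it shrinks the denominator.
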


\begin{theorem}[Competitive Ratio for $t^* > b$]\label{thm:32}
If the optimal threshold satisfies $t^* > b$, the  $\CR(p)$
is bounded by:
\[
\CR(p) \le \frac{t^*-1}{b} + r,
\quad \text{where } r = \frac{S(t^*)}{S(b)} \in [0, 1].
\]
\end{theorem}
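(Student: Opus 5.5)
The plan is to compare $f(t^*)$ and $\OPT(p)$ term by term. Both share the contribution of short horizons $d<b$, so this common part can be peeled off. What remains should be a ratio that is exactly $\frac{t^*-1}{b}+r$. We assume $S(b)>0$ so that $r$ is defined. If $t^*=\infty$, then $r=0$ and the bound $(t^*-1)/b$ is vacuous, so we take $t^*<\infty$.

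\textbf{Step 1: split the expected cost.} Write $A:=\sum_{d<b}p(d)\,d$, so that $\OPT(p)=A+b\,S(b)$. Since $t^*>b$, the rental part of $f(t^*)$ splits as
\[
\sum_{d<t^*}p(d)\,d \;=\; A+\sum_{b\le d<t^*}p(d)\,d .
\]
Every $d$ in the middle range satisfies $d\le t^*-1$. Its total mass is $S(b)-S(t^*)$, since $S(b)$ and $S(t^*)$ are tails of the same distribution and $\{d\ge b\}\setminus\{d\ge t^*\}=\{b\le d<t^*\}$. Hence
\[
f(t^*)\;\le\; A+(t^*-1)\bigl(S(b)-S(t^*)\bigr)+S(t^*)(b+t^*-1)\;=\;A+X,
\]
where $X:=(t^*-1)S(b)+b\,S(t^*)$. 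Setting $Y:=b\,S(b)$, we get $\CR(p)\le \frac{A+X}{A+Y}$ and $\frac{X}{Y}=\frac{t^*-1}{b}+r$.

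\textbf{Step 2: remove the common term $A\ge 0$.} This is the only delicate point. The map $A\mapsto \frac{A+X}{A+Y}$ is nonincreasing in $A\ge 0$ exactly when $X\ge Y$. One cannot simply invoke $t^*-1\ge b$, because $b$ need not be an integer and $t^*>b$ does not imply it.

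Instead we use the pointwise fact $\cost_{\mathcal A_t}(D)\ge \OPT(D)$. Taking expectations gives $f(t^*)\ge \OPT(p)$. Writing $f_{\mathrm{mid}}:=\sum_{b\le d<t^*}p(d)\,d+S(t^*)(b+t^*-1)$ for the part of $f(t^*)$ beyond $A$, this reads $A+f_{\mathrm{mid}}\ge A+Y$, so $f_{\mathrm{mid}}\ge Y$. Step~1 showed $f_{\mathrm{mid}}\le X$, hence $X\ge Y$.

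With $X\ge Y>0$ and $A\ge 0$, the mediant inequality gives $\frac{A+X}{A+Y}\le \frac{X}{Y}$. Combining with Step~1 yields the claimed bound $\CR(p)\le \frac{t^*-1}{b}+r$.

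\textbf{Step 3: check the range of $r$.} Since $t^*>b$, we have $S(t^*)\le S(b)$, which gives $r\in[0,1]$. I expect the main care to be in Step~2, namely justifying the monotonicity in $A$ without assuming integrality of $b$. The argument via $f\ge\OPT$ handles this cleanly.
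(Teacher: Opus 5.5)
Your proof is correct and is essentially the paper's argument. The paper writes $\CR(p)=1+\frac{f_p(t^*)-\OPT(p)}{\OPT(p)}$, uses the same termwise bound $d\le t^*-1$ on $\{b\le d<t^*\}$, and then replaces $\OPT(p)$ by $b\,S(b)$ in the denominator; that step is algebraically identical to your mediant step $\frac{A+X}{A+Y}\le\frac{X}{Y}$, since both need exactly $X-Y\ge 0$ and $A\ge 0$. The only difference is that you justify $X\ge Y$ via $f\ge\OPT$ without assuming $b$ is an integer, whereas the paper silently takes $t^*\ge b+1$ and sums from $d=b$, where that nonnegativity is immediate.
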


Two bounds show that the competitive ratio is governed by how much probability mass lies in the interval $[t^*, b)$, captured by the ratio $r$.

\subsection{Sufficient Conditions for $C$-Competitiveness}

Let $C>1$ and $\alpha=t^*/b$. Define the tail ratio
\[
r(\alpha) \;:=\; \frac{S(t^*)}{S(b)} \;=\; \frac{S(\alpha b)}{S(b)} .
\]
Note that $r(\alpha)\ge 1$ when $\alpha \le 1$ and $r(\alpha)\in[0,1]$ when $\alpha \ge 1$.

\begin{theorem}\label{thm:33}
A threshold policy with threshold $t^*=\alpha b$ is $C$-competitive if either of the following holds:
\begin{enumerate}
    \item \textbf{Early Buy:} $\alpha \le 1$ and
    \[
    r(\alpha) \;\le\;
    \frac{C(1-\alpha)}{1-\frac{1}{b}-(C-1)\alpha}.
    \]
    \item \textbf{Late Buy:} $\alpha \ge 1$ and
    \[
    r(\alpha) \;\le\; C-\alpha+\frac{1}{b}.
    \]
\end{enumerate}
\end{theorem}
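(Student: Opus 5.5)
The plan is to obtain \cref{thm:33} as a purely algebraic consequence of \cref{thm:31} and \cref{thm:32}. I substitute $t^*=\alpha b$ into each bound and find the condition on $r=r(\alpha)$ under which the right-hand side is at most $C$. The two cases of the statement correspond exactly to the two regimes $t^*\le b$ and $t^*>b$. At $\alpha=1$ the two regimes overlap, but there $r=1$ and both conditions are easily checked, so either theorem applies.

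\textbf{Early Buy.} By \cref{thm:31} it suffices to show
\[
\frac{(b-1)r-(b-t^*)}{t^*r+(b-t^*)}\le C-1 .
\]
The denominator $t^*r+(b-t^*)$ is positive because $t^*\ge 1$, $r\ge 1$ and $t^*\le b$. So the inequality is equivalent to
\[
(b-1)r-(b-t^*)\le (C-1)t^*r+(C-1)(b-t^*),
\]
that is,
\[
r\bigl(b-1-(C-1)t^*\bigr)\le C(b-t^*).
\]
Dividing by $b$ and substituting $t^*=\alpha b$ gives
\[
r\Bigl(1-\tfrac1b-(C-1)\alpha\Bigr)\le C(1-\alpha).
\]

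The only delicate point is the sign of the coefficient $1-\frac1b-(C-1)\alpha$. If it is positive, dividing by it gives exactly the stated hypothesis, and the argument reverses to give $\CR\le C$. If it is nonpositive, the left side is at most $0\le C(1-\alpha)$, so the inequality holds automatically. In that case the stated condition should be read as vacuous, or as having a nonpositive denominator. I will remark that the hypothesis is meant for a positive denominator and that otherwise $C$-competitiveness holds unconditionally.

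\textbf{Late Buy.} For $\alpha>1$, \cref{thm:32} gives
\[
\CR(p)\le \frac{t^*-1}{b}+r=\alpha-\frac1b+r .
\]
This is at most $C$ precisely when $r\le C-\alpha+\frac1b$, which is the stated condition.

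The main obstacle is only the bookkeeping in the early-buy case: the sign of the denominator, and making sure the manipulation holds even though $t^*$ is an integer while $\alpha b$ might not be. I take $\alpha$ to be defined as $t^*/b$, so that substitution is exact.
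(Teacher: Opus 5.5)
Your proposal is correct and follows the paper's proof: both substitute $t^*=\alpha b$ into the bounds of Theorem~\ref{thm:31} and Theorem~\ref{thm:32} and solve the resulting inequality for $r$. Your handling of two edge cases is slightly more careful than the paper, which simply assumes a positive denominator and treats the late case as $t^*>b$. The first is $1-\frac1b-(C-1)\alpha\le 0$, where $C$-competitiveness holds unconditionally. The second is the boundary $\alpha=1$, where Theorem~\ref{thm:31} gives $2-\frac1b$, matching the late-buy formula.
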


Achieving $C$-competitiveness requires either that little probability mass lies between $t^*$ and $b$ in the early-buy regime (so that $S(t^*)$ is not much larger than $S(b)$), or that the tail beyond $b$ decays sufficiently fast in the late-buy regime (so that $S(t^*)$ is small relative to $S(b)$).

In particular, to compare against the ratio with no prediction, one may simply substitute $C=\frac{e}{e-1}$.

\section{Robustness under Distributional Shift}

In practical scenarios, the predicted distribution $\hat{p}$ may deviate from the true distribution $p$. We quantify this prediction error using the $1$-Wasserstein distance
\[
\eta := W_1(p, \hat{p})
\]
with ground metric $|i-j|$. To achieve robustness against large errors while remaining consistent with accurate predictions, we introduce the \emph{Clamp Policy}.

\subsection{The Clamp Policy}

Let $t^{\ast}_{\hat p} \in \arg\min_t f_{\hat p}(t)$ denote the optimal threshold for the predicted distribution. For a fixed hyperparameter $\lambda \in (0,1)$, the clamped threshold $\tilde t$ is defined as follows:

\begin{definition}[Clamp Policy]
Given prediction $\hat{p}$ and robustness parameter $\lambda$, the algorithm buys ski on day
\[
\tilde t := \min\Bigl\{\max\{\,t^{\ast}_{\hat p},\ \lceil \lambda b\rceil\},\ \lfloor b/\lambda\rfloor\Bigr\}.
\]
\end{definition}

This policy guarantees that even in the presence of extremely poor predictions, the threshold remains within a safe range relative to the buying cost $b$.

\subsection{Case Study: $\lambda = 1/3$}

Consider a scenario with $\lambda = 1/3$:
\begin{itemize}
    \item Suppose skiing $2b/3$ days occurs with probability $1/2$ and skiing $2b$ days occurs with probability $1/2$.
    \item \textbf{Optimal Threshold ($t^*$):} $t^* = 2b/3+1$, i.e., buy skis after $2b/3$ days.
\end{itemize}

\textbf{Expected Cost Calculation:}
\[
\text{Cost} = \frac12 \left(\frac{2}{3}b\right) + \frac12 \left(\frac{2}{3}b + b \right) = \frac{7}{6}b.
\]
For comparison, deterministic policy in \citet{purohit2018improving} (buying at $b/3$ or $3b$) would yield $
\text{Cost} = \frac{4}{3} b.
$

\subsection{Stability and Distribution-Free Bounds}

We now state two fundamental lemmas that underpin the analysis.

\begin{lemma}[Wasserstein Stability]
\label{lem:stability}
For any threshold $t \in \mathbb{N}$ and prediction error $\eta = W_1(p, \hat{p})$, we have
\[
\mathbb{E}_p[C_t] \le \mathbb{E}_{\hat p}[C_t] + b\,\eta,
\quad \text{and} \quad
\OPT(p) \ge \OPT(\hat p) - \eta.
\]
\end{lemma}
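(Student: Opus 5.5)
Both inequalities will follow from Kantorovich--Rubinstein duality: for any function $g:\mathbb{N}\to\mathbb{R}$ that is $L$-Lipschitz with respect to $|i-j|$, we have $|\mathbb{E}_p[g(D)]-\mathbb{E}_{\hat p}[g(D)]|\le L\,W_1(p,\hat p)=L\eta$. Equivalently, take an optimal coupling $\pi$ of $(D,\hat D)$ with $\mathbb{E}_\pi|D-\hat D|=\eta$ and write $\mathbb{E}_p[g]-\mathbb{E}_{\hat p}[g]=\mathbb{E}_\pi[g(D)-g(\hat D)]\le L\,\mathbb{E}_\pi|D-\hat D|$. This coupling form avoids any subtlety about $\mathbb{N}$-valued supports. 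So the plan reduces to computing Lipschitz constants of the relevant per-instance cost functions.

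For the second inequality, take $g(d)=\OPT(d)=\min\{d,b\}$. This is a minimum of two $1$-Lipschitz functions, so it is $1$-Lipschitz. Hence $\OPT(\hat p)-\OPT(p)\le \eta$, which is exactly $\OPT(p)\ge\OPT(\hat p)-\eta$.

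For the first inequality, $C_t$ denotes the cost of $\mathcal A_t$ on horizon $D$, namely
\[
C_t(d)=\begin{cases} d, & d<t,\\ b+t-1, & d\ge t.\end{cases}
\]
This function is not continuous in the real-variable sense, because at $d=t$ it jumps from $t-1$ to $b+t-1$, a jump of size $b$. The key step is therefore to bound the Lipschitz constant on the integer metric directly. For integers $d<d'$, consider three cases. If both are $<t$, the difference is $d'-d$. If both are $\ge t$, the difference is $0$. If $d<t\le d'$, the difference is $b+t-1-d$, and since $d'-d\ge t-d\ge 1$ we get $b+t-1-d\le b+(t-1-d)\le b(d'-d)$, using $b>1$ and $t-1-d\le d'-d-1$; more precisely $b+(t-1-d)\le b+(d'-d-1)\le b(d'-d)$ because $(b-1)(d'-d-1)\ge 0$. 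So $C_t$ is $b$-Lipschitz on $\mathbb{N}$ (as $b\ge1$ also covers the first case). Applying the coupling bound then gives $\mathbb{E}_p[C_t]\le\mathbb{E}_{\hat p}[C_t]+b\eta$.

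The main obstacle is exactly this discontinuity. The argument relies on the ground metric being restricted to integers, so that any crossing of the threshold forces $|d-d'|\ge 1$, and the jump of size $b$ is then paid for by the factor $b$. I would state this explicitly, together with the fact that couplings are supported on $\mathbb{N}\times\mathbb{N}$. If $p$ or $\hat p$ had infinite first moment and $\eta=\infty$, the bounds hold trivially, so no integrability issues arise.
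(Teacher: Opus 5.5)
Your proposal is correct and follows essentially the same route as the paper: Kantorovich--Rubinstein duality (which you phrase via an optimal coupling) applied to the $1$-Lipschitz function $\min\{d,b\}$ and to the threshold cost $C_t$, which is $b$-Lipschitz on the integer metric. The only difference is cosmetic: you verify the Lipschitz bound for all pairs $d<d'$, whereas the paper checks only consecutive increments $|C_t(d+1)-C_t(d)|\le b$; this suffices by the triangle inequality.
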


\begin{lemma}[Distribution-Free Threshold Bound]
\label{lem:dist-free}
For any distribution $q$ and threshold $t$, the competitive ratio is bounded as
\[
\frac{f_q(t)}{\OPT(q)} \le 
\begin{cases} 
1 + \frac{b-1}{t}, & t \le b, \\
1 + \frac{t-1}{b}, & t \ge b.
\end{cases}
\]
\end{lemma}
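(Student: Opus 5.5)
We would prove Lemma~\ref{lem:dist-free} by the usual pointwise-to-average reduction. First we bound the ratio $\cost_{\mathcal A_t}(d)/\OPT(d)$ for every single horizon $d \ge 1$ by the claimed constant $\rho(t)$. Since $f_q(t) = \sum_d q(d)\,\cost_{\mathcal A_t}(d)$ and $\OPT(q) = \sum_d q(d)\,\OPT(d)$, the pointwise bound $\cost_{\mathcal A_t}(d) \le \rho(t)\,\OPT(d)$ then sums to $f_q(t) \le \rho(t)\,\OPT(q)$. This is the whole reduction: a ratio of expectations is at most the supremum of the pointwise ratios, because $\OPT(d) > 0$ for every $d \ge 1$. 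So the substance is a deterministic worst-case analysis of the threshold rule $\mathcal A_t$.

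For the pointwise step we split on $d$ relative to $t$. If $d < t$, the algorithm never buys and pays $d$. In the case $t \le b$ we have $d < b$, so $\OPT(d) = d$ and the ratio is $1$. In the case $t \ge b$ the ratio is $d/\min\{d,b\} \le (t-1)/b \le 1 + (t-1)/b$. If $d \ge t$, the algorithm pays $t-1+b$ and $\OPT(d) = \min\{d,b\} \ge \min\{t,b\}$. When $t \le b$, the ratio is at most $(t-1+b)/t = 1 + (b-1)/t$. When $t \ge b$, we have $\OPT(d) = b$ because $d \ge t \ge b$, and the ratio is exactly $(t-1+b)/b = 1 + (t-1)/b$. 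In each case the worst value is attained at $d = t$, which matches the two branches of the statement.

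The only delicate point is bookkeeping at the boundaries. We must check that $b$ may be non-integer, so $\OPT(d) = d$ for $d < b$ and $b$ otherwise, and that $t = b$ is consistent with both formulas: both give $2 - 1/b$ when $t = b$ is an integer. We must also handle $d < t$ in the late regime carefully, using that $d \le t - 1$ implies $d/\min\{d,b\} \le \max\{1, (t-1)/b\}$. Once these cases are verified, averaging against $q$ finishes the proof.
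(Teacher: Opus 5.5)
Your proof is correct and follows the paper's argument essentially step for step: you prove the pointwise bound $C_t(d)\le R\,\OPT(d)$ by splitting on $d$ relative to $t$ (and $b$), then average over $q$. One small slip to fix: in the late regime with $d<t$, the ratio $d/\min\{d,b\}$ is bounded by $\max\{1,(t-1)/b\}$ and not by $(t-1)/b$ (for example, when $d<b=t$ it equals $1>(t-1)/b$); your last paragraph states this correctly, and the final bound $1+(t-1)/b$ is unaffected.
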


\subsection{Robust-Consistent Guarantee}

We can now state the main guarantee of the Clamp Policy, capturing both robustness to prediction errors and consistency when predictions are accurate.

\begin{theorem}[Robust-Consistent Bound]
\label{thm:main-robust}
Define the relative prediction error
\[
\theta := \frac{\eta}{\OPT(\hat p)} \in [0,1)
\]
and the predicted competitive ratio
\[
\rho_{\hat p}(t) := \frac{f_{\hat p}(t)}{\OPT(\hat p)}.
\]
Then the competitive ratio $\CR(p)$ of the Clamp Policy satisfies
\begin{equation}
\label{eq:main-min}
\CR(p) \le \min\Biggl\{
\underbrace{1 + \frac{1}{\lambda} - \frac{1}{b}}_{\text{robust}}, \ 
\underbrace{\frac{\rho_{\hat p}(\tilde t) + b\theta}{1-\theta}}_{\text{consistent}}
\Biggr\}.
\end{equation}
\end{theorem}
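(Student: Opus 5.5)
The bound is a minimum of two terms, so I will prove each term separately as an upper bound on $\CR(p)=f_p(\tilde t)/\OPT(p)$. Both parts use only the two lemmas of the previous subsection, applied to the single threshold $\tilde t$ chosen by the Clamp Policy. Throughout, $\tilde t$ is a deterministic function of $\hat p$ and $\lambda$ and does not depend on $p$.

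\emph{Robust term.} By construction, $\lceil \lambda b\rceil \le \tilde t \le \lfloor b/\lambda\rfloor$. I apply Lemma~\ref{lem:dist-free} with $q=p$ and $t=\tilde t$, splitting into two cases.
\begin{itemize}
\item If $\tilde t\le b$, then $\tilde t\ge \lambda b$ gives $1+\frac{b-1}{\tilde t}\le 1+\frac{b-1}{\lambda b}=1+\frac1\lambda-\frac1{\lambda b}$. This is at most $1+\frac1\lambda-\frac1b$, because $\lambda<1$ implies $\frac{1}{\lambda b}\ge \frac1b$.
\item If $\tilde t\ge b$, then $\tilde t\le b/\lambda$ gives $1+\frac{\tilde t-1}{b}\le 1+\frac1\lambda-\frac1b$.
\end{itemize}
The lemma holds for every distribution $q$, so this bound is independent of $p$, $\hat p$ and $\eta$.

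\emph{Consistent term.} I write $C_{\tilde t}$ for the cost of $\mathcal A_{\tilde t}$, so $\E_p[C_{\tilde t}]=f_p(\tilde t)$. Applying Lemma~\ref{lem:stability} at $t=\tilde t$ gives two estimates:
\begin{itemize}
\item the numerator satisfies $f_p(\tilde t)\le f_{\hat p}(\tilde t)+b\eta$;
\item the denominator satisfies $\OPT(p)\ge \OPT(\hat p)-\eta$.
\end{itemize}
Since $\theta<1$, the quantity $\OPT(\hat p)-\eta=\OPT(\hat p)(1-\theta)$ is positive, so I can divide:
\[
\CR(p)\le \frac{f_{\hat p}(\tilde t)+b\eta}{\OPT(\hat p)(1-\theta)}=\frac{\rho_{\hat p}(\tilde t)+b\theta}{1-\theta},
\]
where the last step divides numerator and denominator by $\OPT(\hat p)$. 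Taking the minimum of the two bounds yields \eqref{eq:main-min}.

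\emph{Main obstacle.} The only delicate point is using the stability lemma exactly as stated, in particular the factor $b$ in the cost perturbation. It comes from $d\mapsto \cost_{\mathcal A_t}(d)$ being Lipschitz, but the jump at $d=t$ from $t-1$ to $b+t-1$ makes it $b$-Lipschitz only in a coupling sense. Since the lemma is assumed, I only need to confirm it applies to the clamped threshold $\tilde t$, which holds because it is stated for arbitrary $t\in\mathbb N$. I also need positivity of the denominator, which the hypothesis $\theta\in[0,1)$ supplies.
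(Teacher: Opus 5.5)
Your proof is correct and follows the paper's argument essentially verbatim: the robust term comes from Lemma~\ref{lem:dist-free}, split into the cases $\tilde t\le b$ and $\tilde t\ge b$ using $\lambda b\le \tilde t\le b/\lambda$, and the consistent term comes from the two estimates of Lemma~\ref{lem:stability} followed by division by $\OPT(\hat p)$. As minor asides, $C_t$ is genuinely $b$-Lipschitz on $\mathbb{N}$ (its consecutive increments are $0$, $1$, or $b$), so no ``coupling'' caveat is needed; and, like the paper, you implicitly use $\lceil\lambda b\rceil\le\lfloor b/\lambda\rfloor$, which is automatic for integer $b$ but can fail for non-integer $b$ with $\lambda$ close to $1$.
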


\begin{corollary}[Robustness and Consistency] \label{cor:robust-consistent}
The Clamp Policy guarantees:
\begin{enumerate}
    \item \textbf{Robustness:} For any $\eta \ge 0$, $\CR(p) \le 1 + 1/\lambda - 1/b$, independent of the prediction error.
    \item \textbf{Consistency:} As $\eta \to 0$ (i.e., $\theta \to 0$), $\CR(p) \to \rho_{\hat p}(\tilde t)$. 
\end{enumerate}
\end{corollary}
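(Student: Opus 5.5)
The corollary is read off from the two branches of the minimum in \eqref{eq:main-min} of Theorem~\ref{thm:main-robust}. We take that theorem as given and treat the two claims separately.

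\textbf{Robustness.} We use only the first branch. For completeness, we also check directly that it comes from Lemma~\ref{lem:dist-free}, since the clamped threshold always satisfies $\lceil \lambda b\rceil \le \tilde t \le \lfloor b/\lambda\rfloor$.
\begin{itemize}
\item If $\tilde t \le b$, the lemma gives $\CR(p)\le 1+\frac{b-1}{\tilde t}\le 1+\frac{b-1}{\lambda b} = 1+\frac{1}{\lambda}-\frac{1}{\lambda b}$. This is at most $1+\frac1\lambda-\frac1b$ because $\lambda<1$.
\item If $\tilde t\ge b$, the lemma gives $\CR(p)\le 1+\frac{\tilde t-1}{b}\le 1+\frac{b/\lambda-1}{b}=1+\frac1\lambda-\frac1b$.
\end{itemize}
Lemma~\ref{lem:dist-free} holds for any distribution $q$, so we may apply it with $q=p$. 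Neither bound involves $\hat p$ or $\eta$, so the guarantee holds for every $\eta\ge 0$. This is exactly the claimed robustness.

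\textbf{Consistency.} We use the second branch, $\CR(p)\le \frac{\rho_{\hat p}(\tilde t)+b\theta}{1-\theta}$, which is valid whenever $\theta<1$. The steps are:
\begin{enumerate}
\item Since $\OPT(\hat p)>0$ is fixed (every $d\ge1$ has $\OPT(d)\ge 1$), $\theta=\eta/\OPT(\hat p)\to0$ exactly when $\eta\to 0$.
\item The right-hand side is continuous in $\theta$ at $\theta=0$, where it equals $\rho_{\hat p}(\tilde t)$. Hence $\limsup_{\eta\to0}\CR(p)\le\rho_{\hat p}(\tilde t)$.
\item For the matching lower direction, apply Lemma~\ref{lem:stability} with the roles of $p$ and $\hat p$ swapped; $W_1$ is symmetric, so this is allowed. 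It gives $f_p(\tilde t)\ge f_{\hat p}(\tilde t)-b\eta$.
\item Again by symmetry, $\OPT(p)\le \OPT(\hat p)+\eta$.
\item Combining the two, $\CR(p)\ge \frac{\rho_{\hat p}(\tilde t)-b\theta}{1+\theta}$, which tends to $\rho_{\hat p}(\tilde t)$ as $\theta\to0$.
\end{enumerate}
The two directions together give $\CR(p)\to\rho_{\hat p}(\tilde t)$. The threshold $\tilde t$ depends only on $\hat p$, so it stays fixed while $\eta$ varies.

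The only potentially delicate step is the lower bound, step 3. Lemma~\ref{lem:stability} is stated as one-sided, and its proof must be symmetric in $p$ and $\hat p$. We expect this to hold because the proof uses a coupling argument: the cost $C_t$ and $\OPT(\cdot)$ are $b$-Lipschitz and $1$-Lipschitz in $d$, respectively. If one reads ``$\CR(p)\to\rho_{\hat p}(\tilde t)$'' only as the upper-bound statement, this step is unnecessary.
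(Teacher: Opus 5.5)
Your proof is correct, and for robustness and the upper half of consistency it is the paper's argument. The paper's proof of the corollary simply reads off the two branches of \eqref{eq:main-min}. Your direct check via Lemma~\ref{lem:dist-free} reproduces the robustness step in the paper's proof of Theorem~\ref{thm:main-robust}.

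Your steps 3--5 go beyond the paper. The paper only observes that $\frac{\rho_{\hat p}(\tilde t)+b\theta}{1-\theta}\to\rho_{\hat p}(\tilde t)$. That yields $\limsup_{\eta\to0}\CR(p)\le\rho_{\hat p}(\tilde t)$, not the two-sided convergence the statement asserts. Your matching bound $\CR(p)\ge\frac{\rho_{\hat p}(\tilde t)-b\theta}{1+\theta}$ is what actually justifies ``$\to$''.

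The step you flag as delicate is not. Lemma~\ref{lem:stability} is stated for an arbitrary pair of distributions and $W_1$ is symmetric, so swapping roles needs nothing from its proof. In any case, the paper proves the lemma via Kantorovich--Rubinstein duality with absolute values, which gives $|f_p(t)-f_{\hat p}(t)|\le b\eta$ and $|\OPT(p)-\OPT(\hat p)|\le\eta$ directly.

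Your two bounds also combine to $|\CR(p)-\rho_{\hat p}(\tilde t)|\le\theta\,(\rho_{\hat p}(\tilde t)+b)/(1-\theta)$. Since $\rho_{\hat p}(\tilde t)\le 1+1/\lambda-1/b$ by Lemma~\ref{lem:dist-free} with $q=\hat p$, this tends to $0$ even under the reading where $\hat p$ (hence $\tilde t$) varies rather than $p$.

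One small caveat, shared with the paper, concerns your claim $\lceil\lambda b\rceil\le\tilde t\le\lfloor b/\lambda\rfloor$, which needs the clamp interval to be nonempty. This holds for integer $b$, as the paper tacitly assumes, since then $\lceil\lambda b\rceil\le b\le\lfloor b/\lambda\rfloor$. It can fail otherwise: $b=1.5$, $\lambda=0.9$ forces $\tilde t=1$, and for $p$ a point mass at $1$ the ratio is $1.5>1+1/\lambda-1/b\approx1.44$.
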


\begin{remark}[Alternative Metric]
If the total variation distance
\[
\eta_{\mathrm{TV}} := \frac12 \|p - \hat p\|_1
\]
is used instead, then
\[
\begin{aligned}
\mathbb{E}_p[C_{\tilde t}] &\le \mathbb{E}_{\hat p}[C_{\tilde t}] + \eta_{\mathrm{TV}} (b + \tilde t - 1), \\
\OPT(p) &\ge \OPT(\hat p) - b \eta_{\mathrm{TV}}.
\end{aligned}
\]
Therefore, the consistent term becomes
\[
\CR(p) \le \frac{\rho_{\hat p}(\tilde t) + \eta_{\mathrm{TV}} (b + \tilde t - 1)/\OPT(\hat p)}{1 - b \eta_{\mathrm{TV}} / \OPT(\hat p)},
\]
while the robust term $1 + 1/\lambda - 1/b$ remains unchanged.
Proof is deferred to Appendix~\ref{app:tv-metric-remark}.
\end{remark}

\section{Randomized Algorithms}

Deterministic threshold policies provide a robust safety net, but their competitive performance is fundamentally limited: without any prediction, no deterministic algorithm can achieve a competitive ratio better than $2$ \citep{karlin1988competitive}. Randomization allows for improved guarantees; the classical randomized bound achieves a competitive ratio of $e/(e-1) \approx 1.58$ \citep{karlin1990competitive}.  

In this section, we generalize from a single deterministic threshold to \emph{randomized stopping distributions}. Our objective is to minimize the expected cost under a predicted distribution $\hat{p}$ while enforcing a hard robustness guarantee $R$.

\subsection{Equivalent Conditions for Robustness}

Let $x\in\mathbb{N}$ be the length of the skiing day, and let $z\in\mathbb{N}$ be the buying day, and define $C_z(x)$ as the cost.
Then $C_z(x)=x$ if $z>x$, and $C_z(x)=b+z-1$ if $z\le x$.

A randomized stopping policy $f$ is $R$-robust if $\E_{z\sim f}[C_z(x)]\le R \cdot \OPT(x)$ for all $x\in\mathbb{N}$.
Define
\[
F(x)=\sum_{t=1}^{x} f(t),
\qquad
\mu(x)=\sum_{t=1}^{x} (t-1)\,f(t).
\]

\begin{lemma}[Robustness Constraints]\label{lem:rc}
A randomized stopping policy $f$ is $R$-robust if and only if the following two conditions are satisfied.
\begin{enumerate}
    \item For all $1 \le x \le b-1$,
    \[
        \mu(x) + (b-x)F(x) \le (R-1)x.
    \]
    \item The following holds.
    \[
        \mu(\infty)=\sum_{t=1}^{\infty} (t-1)f(t)\le (R-1)b.
    \]
\end{enumerate}
\end{lemma}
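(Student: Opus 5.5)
The plan is to compute the expected cost $\E_{z\sim f}[C_z(x)]$ in closed form in terms of $F$ and $\mu$, and then split into the regimes $x<b$ and $x\ge b$. Throughout I treat $b$ as an integer, consistent with the range $1\le x\le b-1$ in the statement. I also assume that $f$ is a probability distribution on $\mathbb{N}$, i.e.\ $F(\infty)=1$. For a fixed horizon $x$, the cost is $b+z-1$ when $z\le x$ and $x$ when $z>x$, so
\[
\E_{z\sim f}[C_z(x)] = bF(x) + \mu(x) + x\bigl(1-F(x)\bigr) = x + \mu(x) + (b-x)F(x).
\]
Robustness then reads $x+\mu(x)+(b-x)F(x)\le R\min\{x,b\}$ for every $x\in\mathbb{N}$.

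\textbf{Regime $x\le b-1$.} Here $\OPT(x)=x$. Subtracting $x$ from both sides turns the robustness requirement into exactly condition~1, $\mu(x)+(b-x)F(x)\le (R-1)x$. So for these $x$ the two are equivalent term by term, with no further work.

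\textbf{Regime $x\ge b$.} Here $\OPT(x)=b$, so I must show that the family of inequalities $g(x):=x+\mu(x)+(b-x)F(x)\le Rb$ for all $x\ge b$ is equivalent to condition~2. The key step is that $g$ is nondecreasing. A direct difference computation, using $\mu(x+1)-\mu(x)=x f(x+1)$ and $F(x+1)=F(x)+f(x+1)$, gives
\[
g(x+1)-g(x) = (b-1)f(x+1) + 1 - F(x) \ge 0 .
\]
Hence the family of constraints is equivalent to the single constraint $\lim_{x\to\infty} g(x)\le Rb$. Rewriting $g(x)=bF(x)+\mu(x)+x(1-F(x))$, I need the limit of $x(1-F(x))$:
\begin{itemize}
\item If $\mu(\infty)<\infty$, then $z$ has finite mean. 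Since $x\Pr[z>x]\le \sum_{t>x} t f(t)\to 0$, the limit is $b+\mu(\infty)$, and $b+\mu(\infty)\le Rb$ is precisely condition~2.
\item If $\mu(\infty)=\infty$, then $g(x)\ge \mu(x)\to\infty$, so robustness fails and condition~2 fails as well.
\end{itemize}
This gives both directions.

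The main obstacle is this limiting step, and in particular the role of the assumption $F(\infty)=1$. If the policy never buys with positive probability, then $x(1-F(x))\to\infty$. Robustness then fails even though condition~2 may hold, so the statement needs $f$ to be a proper distribution on finite days. I will state this explicitly, interpreting ``stopping policy'' as one that almost surely buys. I will also note that an $R$-robust policy necessarily has $F(\infty)=1$: otherwise $g(x)\ge x(1-F(x))$ is unbounded, so the $x\ge b$ constraints already rule out that case.
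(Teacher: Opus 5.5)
Your proposal is correct and takes essentially the same route as the paper: the same closed form $\E[C_z(x)]=x+\mu(x)+(b-x)F(x)$, the term-by-term equivalence for $x\le b-1$, and the monotonicity identity $g(x+1)-g(x)=(1-F(x))+(b-1)f(x+1)\ge 0$, which reduces the $x\ge b$ constraints to the single condition $\lim_{x\to\infty} g(x)\le Rb$. Your handling of that limit is more careful than the paper's. The paper jumps to $b+\mu(\infty)$ without showing $x(1-F(x))\to 0$. It also uses only implicitly (via $z\in\mathbb{N}$ and ``$F(u)\to 1$'') that $f$ places no mass on never buying, which you correctly identify as necessary for the equivalence to hold.
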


\subsection{Optimization of the Stopping Distribution}

Given a predicted distribution $\hat{p}$ over skiing days, the consistency cost for buying on day $t$ is
\begin{equation}
\label{gcal}
g(t) = \sum_{d < t} \hat{p}(d) d + \sum_{d \ge t} \hat{p}(d) (b + t - 1).
\end{equation}
The optimization problem is
\[
    \min_{f} \sum_{z=0}^{\infty} g(z) f(z)
    \quad \text{s.t. the robustness constraints in Lemma~\ref{lem:rc}.}
\]

\begin{theorem}
\label{thm:optimal-dist}
If $g$ is monotone increasing,
    \[
        F^*(x) = \min\left( (R-1) \left( \left(1 + \frac{1}{b-1} \right)^x - 1 \right), 1 \right)
    \]
is an optimal distribution.
\end{theorem}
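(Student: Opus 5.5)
The idea is an exchange argument. Since $g$ is monotone increasing, minimizing $\sum_z g(z)f(z)$ amounts to pushing probability mass as early as possible. Equivalently, we want the CDF $F$ to be pointwise as large as the constraints of Lemma~\ref{lem:rc} allow. Summation by parts makes this precise. With $\Delta g(t)=g(t+1)-g(t)\ge 0$ we have
\[
\sum_{t} g(t)f(t) = g(\infty)-\sum_{t}\Delta g(t)\,F(t),
\]
so the objective is a nonincreasing function of each value $F(t)$. It therefore suffices to exhibit a feasible $F^*$ that dominates every feasible $F$ pointwise, i.e.\ $F^*(x)\ge F(x)$ for all $x$.

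\textbf{Step 1: the pointwise upper bound.} I would first show that every feasible $F$ satisfies $F(x)\le (R-1)\bigl((1+\tfrac1{b-1})^x-1\bigr)$ for $x\le b-1$. Write $\mu(x)=\sum_{t\le x}(t-1)f(t)$ and note that $\mu(x)+(b-x)F(x)=\sum_{t\le x}(b-x+t-1)f(t)$.

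\emph{This is the step I expect to be the main obstacle.} Condition 1 bounds only a weighted sum of the $f(t)$, not $F(x)$ itself, so turning it into a bound on $F$ requires an induction on $x$.

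The natural approach is the following. Let $h(x)$ be the proposed bound. By direct algebra it satisfies the equality version of the constraint,
\[
\sum_{t\le x}(b-x+t-1)\,h'(t)=(R-1)x,
\]
where $h'(t)=h(t)-h(t-1)$. One can check this by differencing in $x$: the recurrence reduces to $(b-1)h'(x)=F(x-1)+(R-1)$, which yields the geometric growth factor $1+\frac1{b-1}$. Then suppose $F(x)>h(x)$ for the first time at some $x$. Rewrite the left-hand side of the constraint via Abel summation as $(b-1)F(x)-\sum_{t<x}F(t)$. Because $F\le h$ at earlier points, this quantity is strictly larger for $F$ than for $h$, which gives a contradiction.

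\textbf{Step 2: feasibility of $F^*$.} Next I verify that $F^*=\min(h,1)$ satisfies both robustness constraints. For $x$ before the cap point, condition 1 holds with equality, by the computation above. After the cap, $F^*=1$ and the left-hand side $\mu(x)+(b-x)$ decreases in $x$ while the right-hand side increases, so the constraint continues to hold. This step needs the cap to occur at some point $x\le b-1$, or else a separate treatment of condition 2.

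For condition 2, $\mu(\infty)=\mu(x_0)$, where $x_0$ is the cap point. Applying condition 1 at $x=b-1$, or at the cap point, gives $\mu\le (R-1)x_0-(b-x_0)\le (R-1)b$. If the cap has not been reached by $b-1$, the leftover mass $1-F(b-1)$ must be placed as early as condition 2 allows. I would check that placing it at day $b$ (where condition 1 no longer applies) is consistent with the stated formula for the regime of $R$ considered. Otherwise I would note that the formula implicitly assumes $R$ is large enough that $h(b-1)\ge1$, i.e.\ $R\ge e/(e-1)$ roughly.

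\textbf{Step 3: conclusion.} Combining the domination $F\le F^*$ from Step 1 with the summation-by-parts identity and $\Delta g\ge 0$ shows that $F^*$ attains the minimum. Hence $F^*$ is an optimal distribution.
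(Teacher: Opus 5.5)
Your proof is correct, and it takes a genuinely different route from the paper's.

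\textbf{The paper's route.} The paper argues locally. It fixes an optimal $f$ that additionally minimizes $\mu(\infty)$. It then moves $\varepsilon$ mass from day $x_0+1$ to day $x_0$ to show that every constraint $\mu(x)+(b-x)F(x)\le (R-1)x$ with $x\le b-1$ and $f(x+1)>0$ must be tight. Differencing consecutive tight constraints gives $F(x+1)=\gamma F(x)+\frac{R-1}{b-1}$ with $\gamma=\frac{b}{b-1}$, and the result is truncated at $1$.

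\textbf{Your route.} You argue globally. Writing the constraint as $(b-1)F(x)-\sum_{u<x}F(u)\le (R-1)x$, induction shows every feasible CDF satisfies $F(x)\le G(x):=(R-1)(\gamma^x-1)$ for $x\le b-1$. The capped envelope $\min(G,1)$ is itself feasible. For nondecreasing $g$, the objective equals $g(1)+\sum_{s\ge 1}\Delta g(s)\,(1-F(s))$, so pointwise domination gives optimality. Use this tail form rather than $g(\infty)-\sum_s\Delta g(s)F(s)$, which presumes $g$ is bounded.

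\textbf{What each buys.} Your argument needs less and delivers more:
\begin{itemize}
\item It needs no existence of a $\mu(\infty)$-minimal optimizer.
\item It never has to rule out gaps in the support or mass after day $b$. The paper's tightness claim, stated only for $x\le b-1$ and only where $f(x+1)>0$, leaves both implicit.
\item It checks the tail constraint for $F^\ast$ explicitly.
\item It shows $F^\ast$ is optimal simultaneously for every nondecreasing $g$, i.e.\ it is the stochastically earliest robust stopping rule.
\end{itemize}
The paper's exchange argument, in turn, isolates the ``tight on the support'' principle behind the geometric and atom updates of the water-filling algorithm for non-monotone $g$. Your envelope induction is exactly what the paper later uses in Step~1 of the proof of Theorem~\ref{thm:onehot-all-y}.

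\textbf{The case you left open.} It needs no extra assumption such as $h(b-1)\ge 1$. Suppose $G(b-1)<1$ and the leftover mass $1-G(b-1)$ is placed on day $b$. Tightness at $b-1$ gives $\mu(\infty)=(R-1)(b-1)+(b-1)-bG(b-1)$. This is at most $(R-1)b$ if and only if $G(b)\ge 1$, which is exactly the stated formula capping at $x_0=b$.

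Moreover, $G(b)\ge 1$ is equivalent to the feasible set being nonempty. Moving a feasible policy's mass beyond $b$ onto day $b$ keeps it feasible, and then the tail constraint reads $b-1\le (R-1)b+\sum_{u<b}F(u)\le (b-1)G(b)$. The threshold is therefore $R\ge 1+\frac{1}{\gamma^b-1}$, slightly below $\frac{e}{e-1}$.

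So under the theorem's implicit feasibility assumption, the cap point is some $x_0\le b$ and $F^\ast\equiv 1$ for $x\ge b$. This is also what supplies the domination $F\le F^\ast$ for $x\ge b$, which your Step~1 alone does not cover.
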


\begin{theorem}[Extension beyond monotone $g$]\label{thm:optimal-dist-ext}
Assume that $g$ satisfies the following condition: there exists an integer $y\ge b-1+\ln(R/(R-1))/\ln(b/(b-1))$ such that
\begin{enumerate}
    \item $g(1)\le g(2)\le \cdots \le g(y)$,
    \item $g(t) \ge g(y+1-b)$ for all $t\ge y+1$.
\end{enumerate}
Then the geometric CDF $F^*$ given in Theorem~\ref{thm:optimal-dist} is still optimal.
In particular, this covers the one-hot prediction case $p(y)=1$ with $y \ge b-1+\ln(R/(R-1))/\ln(b/(b-1))$, where
$g(t)=b+t-1$ for $t\le y$ and $g(t)=y$ for $t\ge y+1$.
\end{theorem}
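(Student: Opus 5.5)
Let $y_0 := b-1+\ln(R/(R-1))/\ln(b/(b-1))$. Write $T$ for the last index in the support of $F^*$, i.e.\ the smallest $x$ with $F^*(x)=1$. Solving
\[
(R-1)\bigl((b/(b-1))^x-1\bigr)\ge 1
\]
gives $(b/(b-1))^x \ge R/(R-1)$. Hence
\[
T=\lceil \ln(R/(R-1))/\ln(b/(b-1))\rceil \le y_0-b+1 \le y+1-b .
\]
So $F^*$ puts all of its mass on $\{1,\dots,y+1-b\}\subseteq\{1,\dots,y\}$.

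The plan is to reduce to the monotone case of Theorem~\ref{thm:optimal-dist}. Let $f$ be any feasible stopping distribution (satisfying Lemma~\ref{lem:rc}). I build a modified distribution $f'$ with two properties: it is still feasible, it is supported on $\{1,\dots,y\}$, and $\sum g f' \le \sum g f$. Then I compare $f'$ to $F^*$ using only the values of $g$ on $\{1,\dots,y\}$, where $g$ is monotone.

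\textbf{Step 1 (truncation).} Define $f'$ by moving all mass $f(t)$ with $t\ge y+1$ onto the point $t_0:=y+1-b$, or more carefully onto points $\le y+1-b$.
\begin{itemize}
\item Cost: by condition~2, $g(t)\ge g(y+1-b)$ for every $t\ge y+1$. By condition~1, $g$ is monotone up to $y$, so $g(y+1-b)\ge g(s)$ for every $s\le y+1-b$. Therefore moving mass to $t_0$ or earlier does not increase the objective.
\item Constraint~2 of Lemma~\ref{lem:rc}: $\mu(\infty)$ can only decrease, since mass moves to earlier days.
\item Constraint~1 of Lemma~\ref{lem:rc}: it only involves $x\le b-1$, while $t_0=y+1-b\ge T\ge$ roughly $\ln(R/(R-1))\,(b-1)$. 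I must check whether $t_0\ge b$; if not, moving mass to $t_0<b$ could break constraint~1.
\end{itemize}
This last check is the main obstacle. To avoid it, I would use an exchange argument instead. Rather than moving the tail mass to a single point, I spread it by ``water-filling'': I raise $F$ toward $F^*$ on $\{1,\dots,y\}$, which is feasible because $F^*$ is the pointwise maximal feasible CDF. Constraint~1 with equality defines $F^*$ recursively, and any feasible $F$ satisfies $F\le F^*$ pointwise; this is a lemma I would prove by induction from constraint~1, as in the proof of Theorem~\ref{thm:optimal-dist}. Each unit of mass taken from $t\ge y+1$ is placed at points $s\le T\le y+1-b$, and for such $s$ we have $g(s)\le g(y+1-b)\le g(t)$.

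\textbf{Step 2 (monotone comparison).} After Step~1 every candidate lives on $\{1,\dots,y\}$, where $g$ is nondecreasing. Summation by parts gives
\[
\sum_t g(t)f(t)=g(y)-\sum_{x<y}\bigl(g(x+1)-g(x)\bigr)F(x),
\]
and each increment $g(x+1)-g(x)$ is nonnegative on this range. So the objective is minimized by maximizing $F$ pointwise. Since $F\le F^*$ pointwise, $F^*$ is optimal; $F^*$ is feasible by Theorem~\ref{thm:optimal-dist}. Combining this with Step~1 shows $F^*$ is optimal among all feasible $f$.

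\textbf{One-hot case.} For $p(y)=1$ we have $g(t)=b+t-1$ for $t\le y$, which is increasing, so condition~1 holds. For $t\ge y+1$ we have $g(t)=y\ge b+(y+1-b)-1=y=g(y+1-b)$, so condition~2 holds. Both conditions are verified directly, and the special case follows.
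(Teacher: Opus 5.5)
There is a genuine gap in Step~1, although it is repairable. Your Step~2 is sound: induction on $(b-1)F(x)\le (R-1)x+\sum_{u<x}F(u)$ gives $F\le F^*$ for feasible $F$, and summation by parts turns this into a cost comparison for distributions supported where $g$ is nondecreasing.

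The problem is Step~1, where the non-monotonicity of $g$ has to be handled. You rightly see that dumping the tail mass at $y+1-b$ can violate constraint~1. But your replacement (``raise $F$ toward $F^*$, which is feasible because $F^*$ is the pointwise maximal feasible CDF'') is a non sequitur. The constraint $bF(x)-\sum_{u\le x}F(u)\le (R-1)x$ is not monotone in $F$, so lying below $F^*$ does not imply feasibility. For instance, with $b\ge 3$ and $T>2$, the CDF with $F'(1)=0$, $F'(2)=F^*(2)$ gives $bF'(2)-F'(1)-F'(2)=2(R-1)+F^*(1)>2(R-1)$.

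You also never specify $f'$, and the two properties you need pull against each other. Your cost bullet needs $f'-f\ge 0$ on $\{1,\dots,T\}$, i.e., tail mass is only relocated there. Step~2 needs $F'\le F^*$. The obvious choice $F'=\min\{F+m,F^*\}$ with $m=1-F(y)$ gives the second but not the first, because $F^*-F$ need not be nondecreasing. For example, $F(1)=0$, $F(2)=2(R-1)/(b-1)$ meets constraint~1 at $x=1,2$, yet $F^*(1)-F(1)=(R-1)/(b-1)>(R-1)/(b-1)^2=F^*(2)-F(2)$. So $f'$ loses mass at day $2$, and your pointwise exchange does not apply as written.

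The repair is short, and it actually makes Step~1 unnecessary. Feasibility of $f'$ is never used, only $F'\le F^*$. So you may take $F'=F+\Delta$ with $\Delta(x)=\min\{m,\min_{x\le u\le T}(F^*(u)-F(u))\}$ for $0\le x\le T$ and $\Delta(x)=m$ for $T\le x\le y$. This $\Delta$ is nondecreasing, is dominated by $F^*-F$, and adds total mass $m$ only on $\{1,\dots,T\}$.

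Cleaner still is to change the cost rather than the distribution, which is exactly what the paper does. Set $\tilde g(t):=g(\min\{t,T\})$. Conditions~1--2 and $T\le y+1-b$ give $\tilde g\le g$, with $\tilde g$ nondecreasing and $\tilde g=g$ on the support of $f^*$. Hence
\[
\sum_t g(t)f(t)\;\ge\;\sum_t \tilde g(t)f(t)\;=\;g(T)-\sum_{x=1}^{T-1}\bigl(g(x+1)-g(x)\bigr)F(x)\;\ge\;\sum_t g(t)f^*(t).
\]
The paper closes the middle step by citing Theorem~\ref{thm:optimal-dist} for $\tilde g$ as a black box. Your domination lemma plus summation by parts is a self-contained substitute for that citation.

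Two small points:
\begin{enumerate}
\item You wrote $T=\lceil L\rceil\le L$ with $L=\ln(R/(R-1))/\ln(b/(b-1))$, which is false unless $L$ is an integer. The bound $T\le y+1-b$ instead follows from integrality of $y+1-b\ge L$.
\item Your induction gives $F\le F^*$ only for $x\le b-1$. Using it for all $x<T$ needs $T\le b$. That holds exactly when the feasible set is nonempty, an assumption implicit in the statement since $F^*$ must itself be feasible.
\end{enumerate}
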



\begin{remark}
This geometric structure is a natural extension of the randomized ski rental algorithm of \citet{purohit2018improving}.
For the one-hot prediction $y$, if $y\ge b-1+\ln(R/(R-1))/\ln(b/(b-1))$, we can apply Theorem~\ref{thm:optimal-dist-ext}, and it leads to the same structure as the randomized ski rental algorithm of \citet{purohit2018improving}. However, if $y < b-1+\ln(R/(R-1))/\ln(b/(b-1))$, the structure of the randomized ski rental algorithm of \citet{purohit2018improving} is suboptimal, and we provide an optimal distribution in Theorem~\ref{thm:onehot-all-y}.
\end{remark}

\subsection{The Water-Filling Algorithm}
Reformulating \eqref{gcal}, the consistency cost $g(t)$ of stopping at day $t$ is
\begin{equation}
g(t) =  \left( \sum_{d \ge t} \hat{p}(d) \right) t + \left( \sum_{d < t} \hat{p}(d) d + \sum_{d \ge t} \hat{p}(d) (b - 1) \right) .
\end{equation}
Therefore, for a predicted distribution $\hat{p}$ with finite support $\{d_1, \dots, d_n\}$, the consistency cost $g(t)$ of stopping at day $t$ is a piecewise linear function over the intervals $(d_k, d_{k+1}]$. 


\begin{algorithm}[H]
\caption{Water-Filling for Randomized Robust Ski Rental}
\begin{algorithmic}[1]
\REQUIRE Prediction $p$, buy cost $b$, robustness $R$
\ENSURE Probability Mass Function $f$
\STATE Compute piecewise linear cost function $g(t) = a_k t + b_k$.
\STATE Perform binary search for water level $h$:
\STATE \quad For each segment $(d_i, d_{i+1}]$ with $g(d_i) \le h$ and $d_i < b$, find endpoint
\[
    e_i = \min \left\{ \left\lfloor \frac{h-b_i}{a_i} \right\rfloor, d_{i+1}, b \right\}.
\]
\STATE Going through active segments, calculate $F(x)$ as following.
\STATE \quad Update $F(x)$ along $(d_i, e_i]$ using geometric growth from the tightness condition:
\[
    F(x+1) = F(x) + \frac{F(x) + R-1}{b -1}.
\]
\STATE \quad For the starting point $d_j$ of next support ($j > i$ is a minimum number with $g(d_j) \le h$), assign mass
\[
    m = \frac{(R-1 + F(e_i))(d_j - e_i)}{b-1},
    \]
    \[
    F(d_j) = F(e_i) + m.
\]
\STATE \quad  Check feasibility at the tail:
If there is $d_k \ge b$ satisfying $g(d_k) \le h$ and
\[
d_k \le \frac{(R-1)b-\mu(b-1)}{1-F(b-1)},
\]
then $h$ is feasible.
Otherwise, $h$ is infeasible, and we have to consider a bigger $h$.
\STATE \quad  Adjust $h$ via binary search to find the minimum feasible $h$ approximately.
\end{algorithmic}
\label{alg:waterfilling}
\end{algorithm}

\paragraph{Algorithm Intuition.} 
We assume a certain level $h$.
We fill the probability mass on the area $\{x:g(x) \le h\}$.
We allocate the maximum probability as we can while satisfying the robustness constraint.
If we cannot allocate the total probability mass $1$, then the level $h$ is infeasible, and we have to increase $h$.
We want to find a minimum feasible $h$ to minimize the cost $\sum_{z=0}^{\infty} g(z) f(z)$ while allocating total mass $1$.
We find it approximately by binary search.

An implementation-level description is provided in Appendix~\ref{immm}.

\par\vfill\eject



\paragraph{High-Level Algorithm Strategy.}
\begin{enumerate}
\item Calculate the cost function $g$
\item Initialize the interval of height and set $h$ as a midpoint
    \item For the interval $[0, b-1]$, fill weights maximally using the update rules
    \item For the interval $[b, \infty)$, check the feasibility
\item If feasible, reduce to the lower interval; otherwise, reduce to the upper interval.
\end{enumerate}

\section{Experimental Evaluation}

We have attached the code used to reproduce the experimental results in the supplementary material on OpenReview.

\subsection{Consistency Comparison under Fixed Robustness}

\paragraph{Classical Ski Rental.}
We find a relation between the robustness parameter $R$ and the parameter $\lambda$ used in \citet{purohit2018improving}.
For the randomized algorithm of \citet{purohit2018improving}, the competitive ratio is given by $R=
    \frac{1+\frac{1}{b}}
         {1 - e^{-(\lambda - \frac{1}{b})}}$.
Rearranging the expression yields
\begin{align} \label{lamR}
    \lambda
    &= \frac{1}{b}
       - \log\!\left(1 - \frac{1+\frac{1}{b}}{R}\right).
\end{align}

This expression allows us to directly map our robustness parameter $R$ to the parameter $\lambda$ of \citet{purohit2018improving} for experimental comparison.

\paragraph{Consistency Metric.}
For a ski day length distribution $p$, define the cost function
\[
    g(t) = \sum_{d < t} p(d) d + \sum_{d \ge t} p(d) (b + t - 1), \quad t \ge 1.
\]
For a \emph{randomized} buying day $Z \in \mathbb{N}$, the consistency is defined as
\begin{equation}
\label{eq:consistency-def}
    \mathrm{Cons}(p) \;:=\; \frac{\mathbb{E}[g(Z)]}{\min_{t \in \mathbb{N}} g(t)}.
\end{equation}

\paragraph{\citet{purohit2018improving}-style Baselines under Distributional Predictions.}
To make a comparison fair, we fix $(b,R)$ and let $\lambda$ as \eqref{lamR}.

Algorithm~3 in \citet{purohit2018improving} assumes a \emph{point prediction} $y$ and chooses one of two branch distributions depending on whether $y \ge b$ or $y < b$.  
When the predictor outputs a distribution $p$ with $y \sim p$, we adapt the branch condition as follows.

Let $P := \Pr[y \ge b]$ under the predicted distribution.
We consider two baselines for producing a randomized buying day $Z$:
\begin{itemize}
    \item \textbf{Majority-branch baseline:} use the $y \ge b$ branch if $P > 1/2$, otherwise the $y < b$ branch.
    \item \textbf{Mixture baseline:} mix the two branch distributions proportionally to $P$:
    \begin{equation}
    \label{eq:mixture-baseline}
        \Pr[Z=j] = P \cdot q_j + (1-P) \cdot r_j,
    \end{equation}
    where $q$ and $r$ are the distributions of Algorithm~3 for the $y \ge b$ and $y < b$ branches, respectively.
\end{itemize}

In all cases (ours and baselines), \eqref{eq:consistency-def} is evaluated using the same $g(\cdot)$ derived from the skiing day distribution $p$.

\paragraph{Experimental Setup.}
We set $(b,R) = (50,1.7)$ and consider five representative skiing day distributions $p$ that allocate nontrivial probability mass on both sides of $b$:
\begin{enumerate}
    \item Uniform on $\{1,\dots,100\}$ (\textsf{unif100})
    \item Uniform on $\{1,\dots,200\}$ (\textsf{unif200})
    \item Discretized Gaussian $\mathcal{N}(50, 12^2)$ truncated to $\{1,\dots,150\}$ (\textsf{gauss})
    \item Truncated geometric with parameter $0.05$ on $\{1,\dots,600\}$ (\textsf{geom})
    \item Two-point distribution $0.7 \delta_{30} + 0.3 \delta_{120}$ (\textsf{twopoint})
\end{enumerate}

For our method, we compute the policy via water-filling and bisection, then construct the PMF of $Z$ and evaluate $\mathbb{E}[g(Z)]$.  
For both baselines, we compute their PMF of $Z$  based on branching and evaluate the $\mathbb{E}[g(Z)]$.

\begin{table}[t]
\centering
\caption{Consistency comparison (smaller is better) at $(b,R)=(50,1.7)$.}
\label{tab:consistency-ratios-only}
\begin{tabular}{lccc}
\toprule
$p$ family & \textbf{Ours} & Purohit (maj) & Purohit (mix) \\
\midrule
\textsf{unif100}  & \textbf{1.1612} & 1.1782 & 1.1866 \\
\textsf{unif200}  & \textbf{1.3331} & 1.3492 & 1.3643 \\
\textsf{gauss}    & \textbf{1.3375} & 1.4195 & 1.4169 \\
\textsf{geom}     & \textbf{1.2879} & 1.4114 & 1.4183 \\
\textsf{twopoint} & \textbf{1.0415} & 1.2448 & 1.2547 \\
\bottomrule
\end{tabular}
\end{table}

\paragraph{Discussion.}
Table~\ref{tab:consistency-ratios-only} demonstrates that our randomized policy consistently improves the consistency metric over both \citet{purohit2018improving}-style baselines.  
Significant gains appear for the uniform families (\textsf{unif100}, \textsf{unif200}), the Gaussian (\textsf{gauss}) near $b$, and the geometric distribution (\textsf{geom}).  
The largest improvement occurs for the bi-modal \textsf{twopoint} distribution, where our method substantially outperforms both baselines.

\subsection{Performance Comparison Under Prediction Error}

\paragraph{Setup.}
We evaluate robustness to prediction error by perturbing the predicted distribution.
Fix $(b,R)=(50,1.7)$ and let the true skiing-day distribution be the discretized Gaussian
$\mathcal{N}(90,12^2)$ truncated to $\{1,\dots,M\}$ (with $M$ chosen large enough to include essentially all mass).
For each Wasserstein budget $\eta \ge 0$, we generate a perturbed prediction $\widehat{p}$ from $p$
via randomized mass transport with total transport cost at most $\eta$.
We then compute the randomized buying-day distribution of each method using $\widehat{p}$ as input,
but evaluate performance using the \emph{true} distribution $p$.

\paragraph{Evaluation Metric.}
For each method, we report the (true) consistency $\mathrm{Cons}(p)=\mathbb{E}[g_p(Z)]/\min_t g_p(t)$,
where $g_p(\cdot)$ is defined from $p$ as in \eqref{eq:consistency-def} and the expectation is taken over the
actual PMF of $Z$ produced by the method (computed from $\widehat{p}$).

\begin{figure}[t]
    \centering
    \includegraphics[width=0.80\linewidth]{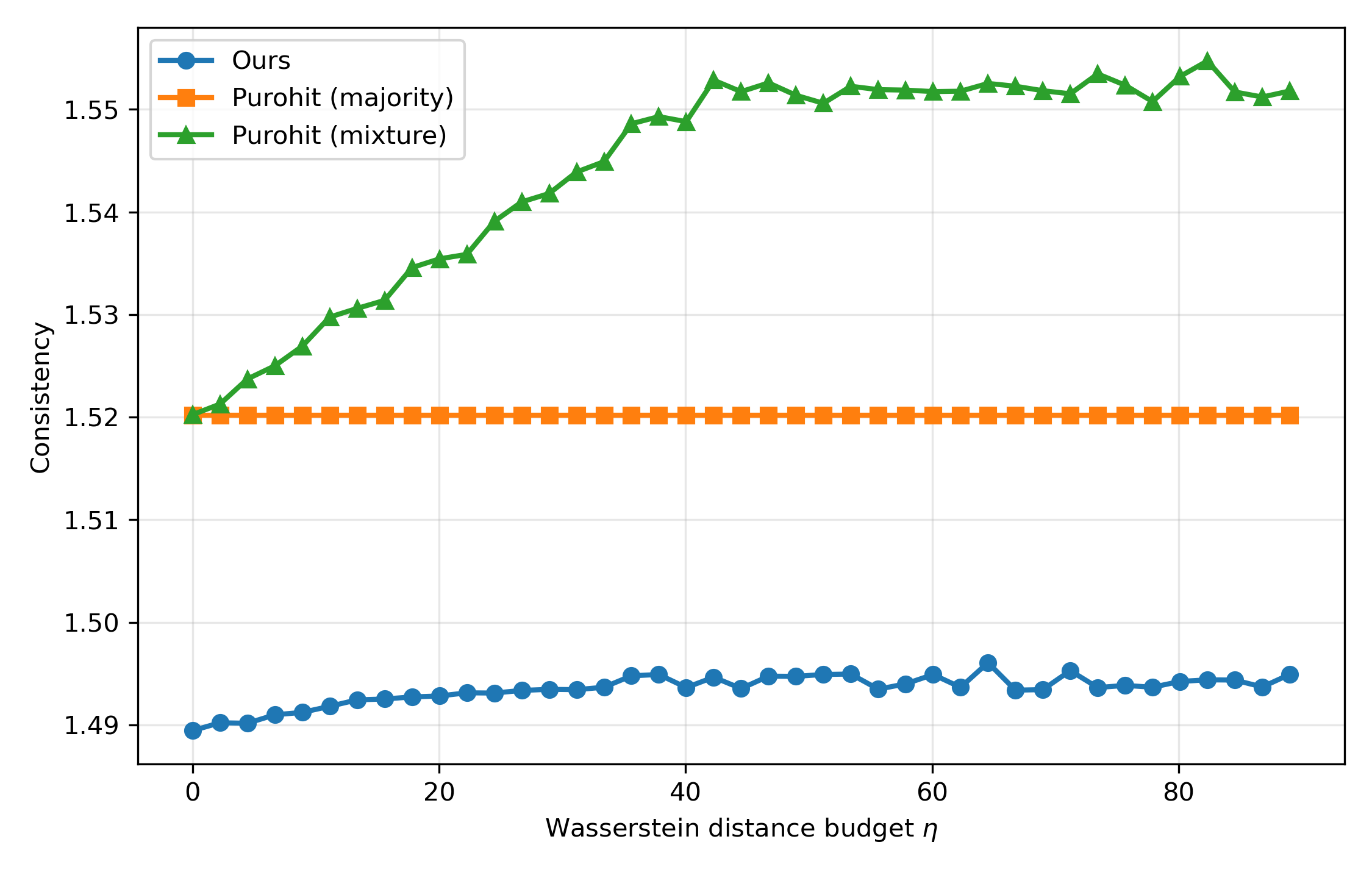}
    \caption{Consistency under prediction error. The x-axis is the Wasserstein perturbation budget $\eta$ used to generate $\widehat{p}$ from the true $p$; policies are computed from $\widehat{p}$, but the consistency is evaluated under $p$. The plot reports the mean over $25$ random transports per $\eta$.}
    \label{fig:consistency-vs-eta-gauss90}
\end{figure}

\paragraph{Results.}
Figure~\ref{fig:consistency-vs-eta-gauss90} shows that our policy remains consistently better than both
\citet{purohit2018improving}-style baselines across the full range of $\eta$.
The majority-branch baseline stays essentially flat, while the mixture baseline degrades as $\eta$ increases,
reflecting its sensitivity to distributional perturbations.
In contrast, our method exhibits only mild degradation and maintains a stable advantage throughout.

\section{Conclusion}

We presented a unified framework for the ski rental problem that integrates distributional predictions while preserving rigorous robustness guarantees. Our approach bridges classical worst-case analysis and learning-augmented methods, enabling both deterministic and randomized algorithms to adapt smoothly to predictive accuracy. The proposed Clamp Policy and Water-Filling Algorithm achieve tight, interpretable competitive guarantees that recover classical bounds  while improving consistency under  predictions.

\section*{Impact Statement}

This work advances the theoretical foundations of online decision-making by providing robust methods for integrating distributional predictions into algorithm design. The proposed framework is primarily theoretical in nature and does not introduce direct societal risks. By enabling more principled use of predictive information while maintaining worst-case guarantees, this research may contribute to more reliable decision-making systems in applications such as resource allocation and operations management.

\section*{Acknowledgement}
This work was supported by the National Research Foundation of Korea (NRF) under Grant No.~RS-2026-25484051, and by the startup fund from Seoul National University.

\nocite{langley00}

\bibliography{example_paper,references}
\bibliographystyle{icml2026}

\newpage
\appendix
\onecolumn

\section{Proofs of Theoretical Results in Section 3}

\subsection{Proof of Theorem~\ref{thm:31} (Competitive Ratio for $t^* \le b$)}
\begin{proof}
The expected cost for a threshold $t^*$ is given by
\[
f_p(t^*) = \sum_{d < t^*} p(d)d + \sum_{d \ge t^*} p(d)(b + t^* - 1).
\]
The offline optimum is $\OPT(p) = \sum_{d < b} p(d)d + b \cdot S(b)$, where $S(t) := \sum_{d \ge t} p(d)$.
For the case where $t^* \le b$, the difference between the expected cost and the optimum is:
\[
f_p(t^*) - \OPT(p)
= \sum_{d = t^*}^{b-1} (b + t^* - 1 - d)p(d) + (t^* - 1)S(b).
\]
To find the upper bound, we observe that for $d \in [t^*, b-1]$, the term $(b + t^* - 1 - d)$ is at most $(b - 1)$. Additionally, we lower bound the denominator:
\[
\OPT(p)
\ge \sum_{d=t^*}^{b-1} p(d)d + b \cdot S(b)
\ge t^*(S(t^*) - S(b)) + b \cdot S(b).
\]
Substituting these into the competitive ratio $\CR(p) = 1 + \frac{f_p(t^*) - \OPT(p)}{\OPT(p)}$:
\[
\CR(p)
\le 1 + \frac{(b-1)(S(t^*) - S(b)) + (t^* - 1) S(b)}
{t^*(S(t^*) - S(b)) + b \cdot S(b)}.
\]
By dividing the numerator and denominator by $S(b)$ and letting $r = \frac{S(t^*)}{S(b)}$, we obtain:
\[
\CR(p)
\le 1 + \frac{(b-1)(r-1) + (t^* - 1)}{t^*(r-1) + b}
= 1 + \frac{(b-1)r - (b - t^*)}{t^* r + (b - t^*)}.
\]
\end{proof}

\subsection{Proof of Theorem~\ref{thm:32} (Competitive Ratio for $t^* > b$)}
\begin{proof}
When $t^* \ge b+1$, the difference between the expected cost and the offline optimum is:
\[
f_p(t^*) - \OPT(p) = \sum_{d = b}^{t^*-1} (d-b)p(d) + (t^*-1) S(t^*).
\]
For any $d$ in the range $[b, t^*-1]$, we have $d-b \le t^*-1-b$. Thus, the summation is bounded as follows:
\[
\sum_{d = b}^{t^*-1} (d-b)p(d) \le (t^*-1-b)(S(b) - S(t^*)).
\]
Using the inequality $\OPT(p) \ge b \cdot S(b)$:
\[
\CR(p)
\le 1 + \frac{(t^*-1-b)(S(b) - S(t^*)) + (t^*-1) S(t^*)}{b S(b)}
\le 1 + \frac{(t^*-1-b)S(b) + b S(t^*)}{b S(b)}.
\]
Letting $r = \frac{S(t^*)}{S(b)}$, we arrive at the simplified bound:
\[
\CR(p) \le \frac{t^*-1}{b} + r.
\]
\end{proof}

\subsection{Proof of Theorem~\ref{thm:33}}
\begin{proof}
Fix $C>1$ and write $t^*=\alpha b$. Define $r := \frac{S(t^*)}{S(b)}$.

\paragraph{Case 1: Early Buy ($t^* \le b$, i.e., $\alpha \le 1$).}
From Theorem~\ref{thm:31}, we have
\[
\CR(p)
\le
1+\frac{(b-1)r-(b-t^*)}{t^*r+(b-t^*)}.
\]
It suffices to require
\[
\frac{(b-1)r-(b-t^*)}{t^*r+(b-t^*)} \le C-1.
\]
Equivalently,
\begin{align*}
&\bigl((b-1)-(C-1)t^*\bigr)r
\le C(b-t^*).
\end{align*}
Whenever $(b-1)-(C-1)t^*>0$, this implies
\[
r \le \frac{C(b-t^*)}{(b-1)-(C-1)t^*}.
\]
Substituting $t^*=\alpha b$ and dividing numerator and denominator by $b$ yields
\[
r(\alpha)
=
\frac{S(\alpha b)}{S(b)}
\le
\frac{C(1-\alpha)}{1-\frac{1}{b}-(C-1)\alpha},
\qquad \text{where } 1-\frac{1}{b}-(C-1)\alpha>0.
\]

\paragraph{Case 2: Late Buy ($t^* > b$, i.e., $\alpha \ge 1$).}
From Theorem~\ref{thm:32} and $r=\frac{S(t^*)}{S(b)}$, we have
\[
\CR(p) \le \frac{t^*-1}{b} + r.
\]
Thus, a sufficient condition for $\CR(p)\le C$ is
\[
\frac{t^*-1}{b}+r \le C
\quad\Longleftrightarrow\quad
r \le C-\frac{t^*-1}{b}
=
C-\alpha+\frac{1}{b},
\]
together with $C-\alpha+\frac{1}{b}\ge 0$.

This completes the proof.
\end{proof}

\section{Proofs of Theoretical Results in Section 4}

\subsection{Proofs for Stability and Distribution-Free Bounds}

\begin{proof}[Proof of Lemma \ref{lem:stability} (Wasserstein Stability)]
The proof relies on the Kantorovich--Rubinstein duality:
\[
W_1(p, \hat{p}) = \sup_{\|h\|_L \le 1} \left| \mathbb{E}_p[h(D)] - \mathbb{E}_{\hat{p}}[h(D)] \right|,
\]
where $\|h\|_L \le 1$ means that $h$ is $1$-Lipschitz.

\paragraph{Cost stability.}
For a fixed threshold $t$, define
\[
C_t(d) = d \cdot \mathbb{I}(d < t) + (b + t - 1)\cdot \mathbb{I}(d \ge t).
\]
Then for all $d\in\mathbb{N}$ we have $|C_t(d+1)-C_t(d)|\le b$:
the increment is $1$ when both $d,d+1<t$, it is $0$ when both $d,d+1\ge t$, and it equals
$C_t(t)-C_t(t-1)=(b+t-1)-(t-1)=b$ at the jump. Hence $C_t$ is $b$-Lipschitz, and by duality,
\[
\mathbb{E}_p[C_t] - \mathbb{E}_{\hat p}[C_t] \le b\,W_1(p,\hat p)=b\eta,
\]
which yields $\mathbb{E}_p[C_t] \le \mathbb{E}_{\hat p}[C_t] + b\eta$.

\paragraph{Optimum stability.}
The offline optimal cost is $\OPT(d)=\min\{d,b\}$, which is $1$-Lipschitz. Applying duality,
\[
\left|\OPT(p)-\OPT(\hat p)\right|
\le W_1(p,\hat p)=\eta
\quad\Rightarrow\quad
\OPT(p)\ge \OPT(\hat p)-\eta.
\]
\end{proof}

\begin{proof}[Proof of Lemma \ref{lem:dist-free} (Distribution-Free Threshold Bound)]
It suffices to prove a pointwise bound: for each $d\in\mathbb{N}$, show
\[
C_t(d)\le R\,\OPT(d)
\quad\Rightarrow\quad
\mathbb{E}[C_t(D)]\le R\,\mathbb{E}[\OPT(D)]
\quad\Rightarrow\quad
\frac{f_q(t)}{\OPT(q)}\le R.
\]

\paragraph{Case 1: $t\le b$.}
If $d<t$, then $C_t(d)=d=\OPT(d)$.  
If $t\le d<b$, then $C_t(d)=b+t-1$ and $\OPT(d)=d$, so
\[
\frac{C_t(d)}{\OPT(d)}=\frac{b+t-1}{d}\le \frac{b+t-1}{t}=1+\frac{b-1}{t}.
\]
If $d\ge b$, then $\OPT(d)=b$ and
\[
\frac{C_t(d)}{\OPT(d)}=\frac{b+t-1}{b}\le \frac{b+t-1}{t}=1+\frac{b-1}{t},
\]
since $t\le b$. Thus $R=1+\frac{b-1}{t}$ works.

\paragraph{Case 2: $t\ge b$.}
If $d<t$, then $C_t(d)=d$ and $\OPT(d)=\min\{d,b\}$, hence
\[
\frac{C_t(d)}{\OPT(d)} \le \max\left\{1,\frac{t-1}{b}\right\}\le 1+\frac{t-1}{b}.
\]
If $d\ge t$, then $C_t(d)=b+t-1$ and $\OPT(d)=b$, so
\[
\frac{C_t(d)}{\OPT(d)}=\frac{b+t-1}{b}=1+\frac{t-1}{b}.
\]
Thus $R=1+\frac{t-1}{b}$ works.
\end{proof}

\subsection{Proof of Theorem \ref{thm:main-robust} (Robust-Consistent Bound)}
\begin{proof}
We show that $\CR(p)$ is bounded by a robustness term (distribution-free) and a consistency term (prediction-dependent).

\paragraph{1. Robustness.}
By construction, the Clamp Policy selects
\[
\tilde{t} = \text{clamp}\!\left(\hat{t}, \lceil \lambda b \rceil, \lfloor b/\lambda \rfloor\right),
\]
so $\tilde{t}\in[\lambda b,\, b/\lambda]$. By Lemma \ref{lem:dist-free}:
\begin{itemize}
    \item If $\tilde{t}\le b$, then
    \[
    \CR(p)\le 1+\frac{b-1}{\tilde{t}}
    \le 1+\frac{b-1}{\lambda b}
    = 1+\frac{1}{\lambda}-\frac{1}{\lambda b}
    \le 1+\frac{1}{\lambda}-\frac{1}{b}.
    \]
    \item If $\tilde{t}\ge b$, then
    \[
    \CR(p)\le 1+\frac{\tilde{t}-1}{b}
    \le 1+\frac{b/\lambda-1}{b}
    = 1+\frac{1}{\lambda}-\frac{1}{b}.
    \]
\end{itemize}
Hence $\CR(p) \le 1 + \frac{1}{\lambda} - \frac{1}{b}$ for all $p$.

\paragraph{2. Consistency.}
By Lemma \ref{lem:stability},
\[
f_p(\tilde{t}) \le f_{\hat{p}}(\tilde{t}) + b\eta,
\qquad
\OPT(p) \ge \OPT(\hat{p}) - \eta.
\]
Therefore,
\[
\CR(p)=\frac{f_p(\tilde{t})}{\OPT(p)}
\le
\frac{f_{\hat{p}}(\tilde{t})+b\eta}{\OPT(\hat{p})-\eta}.
\]
Dividing numerator and denominator by $\OPT(\hat{p})$ and using
$\rho_{\hat p}(\tilde t)=f_{\hat p}(\tilde t)/\OPT(\hat p)$ and
$\theta=\eta/\OPT(\hat p)$ gives
\[
\CR(p) \le \frac{\rho_{\hat p}(\tilde t)+b\theta}{1-\theta}.
\]
Combining robustness and consistency bounds yields \eqref{eq:main-min}.
\end{proof}

\subsection{Proof of Corollary \ref{cor:robust-consistent}}
\begin{proof}
The corollary follows directly from the two terms in Theorem \ref{thm:main-robust}:
\begin{enumerate}
    \item \textbf{Robustness:} The term $1 + 1/\lambda - 1/b$ is independent of $\eta$ and $\theta$.
    \item \textbf{Consistency:} As $\eta\to 0$, we have $\theta\to 0$, and thus
    \[
    \lim_{\theta\to 0}\frac{\rho_{\hat p}(\tilde t)+b\theta}{1-\theta}=\rho_{\hat p}(\tilde t).
    \]
\end{enumerate}
\end{proof}

\subsection{Proof of Remark (Alternative Metrics: Total Variation Distance)}
\label{app:tv-metric-remark}

Let $p$ and $\hat p$ be two distributions over $D\in\mathbb{N}$, and define the total variation distance
\[
\eta_{\mathrm{TV}} := \frac12 \|p-\hat p\|_1 = \sup_{A\subseteq\mathbb{N}} \bigl|p(A)-\hat p(A)\bigr|.
\]
Fix any threshold $t\in\mathbb{N}$ and recall the (deterministic) threshold cost
\[
C_t(d) := 
\begin{cases}
d, & d<t,\\
b+t-1, & d\ge t.
\end{cases}
\]

\paragraph{A basic TV inequality.}
For any function $\varphi:\mathbb{N}\to\mathbb{R}$ with $\|\varphi\|_\infty := \sup_{d}|\varphi(d)|<\infty$,
\begin{equation}
\label{eq:tv-basic}
\bigl|\E_p[\varphi(D)]-\E_{\hat p}[\varphi(D)]\bigr|
\le \eta_{\mathrm{TV}}\cdot \bigl(\sup_d \varphi(d)-\inf_d \varphi(d)\bigr)
\le 2\eta_{\mathrm{TV}}\|\varphi\|_\infty.
\end{equation}
To prove \eqref{eq:tv-basic}, let $m:=\inf_d\varphi(d)$ and $M:=\sup_d\varphi(d)$, so that
$\varphi(d)=m+(M-m)\psi(d)$ for some $\psi:\mathbb{N}\to[0,1]$. Then
\[
\E_p[\varphi]-\E_{\hat p}[\varphi]
=(M-m)\bigl(\E_p[\psi]-\E_{\hat p}[\psi]\bigr),
\]
and since $0\le \psi\le 1$, we have
\[
\bigl|\E_p[\psi]-\E_{\hat p}[\psi]\bigr|
\le \sup_{A\subseteq\mathbb{N}} |p(A)-\hat p(A)|=\eta_{\mathrm{TV}},
\]
which yields \eqref{eq:tv-basic}.

\paragraph{Cost stability under $\eta_{\mathrm{TV}}$.}
For the threshold cost $C_t$, we have $\inf_d C_t(d)=1$ and $\sup_d C_t(d)=b+t-1$ (since $C_t(d)=d$ for $d<t$ and $C_t(d)=b+t-1$ for $d\ge t$). Hence
\[
\sup_d C_t(d)-\inf_d C_t(d)\le b+t-1.
\]
Applying \eqref{eq:tv-basic} with $\varphi=C_t$ gives
\[
\E_p[C_t] \le \E_{\hat p}[C_t] + \eta_{\mathrm{TV}}(b+t-1).
\]
In particular, for the clamped threshold $\tilde t$,
\begin{equation}
\label{eq:tv-cost-stab}
\E_p[C_{\tilde t}] \le \E_{\hat p}[C_{\tilde t}] + \eta_{\mathrm{TV}}(b+\tilde t-1).
\end{equation}

\paragraph{Offline optimum stability under $\eta_{\mathrm{TV}}$.}
Let $\OPT(d)=\min\{d,b\}$. Then $0\le \OPT(d)\le b$, so $\sup_d \OPT(d)-\inf_d \OPT(d)\le b$.
Applying \eqref{eq:tv-basic} with $\varphi=\OPT$ yields
\begin{equation}
\label{eq:tv-opt-stab}
\E_p[\OPT(D)] \ge \E_{\hat p}[\OPT(D)] - b\,\eta_{\mathrm{TV}}.
\end{equation}

\paragraph{Consistency bound.}
Write
\[
f_q(t):=\E_{D\sim q}[C_t(D)],
\qquad
\OPT(q):=\E_{D\sim q}[\OPT(D)],
\qquad
\rho_{\hat p}(\tilde t):=\frac{f_{\hat p}(\tilde t)}{\OPT(\hat p)}.
\]
Combining \eqref{eq:tv-cost-stab} and \eqref{eq:tv-opt-stab} gives
\[
\CR(p)=\frac{f_p(\tilde t)}{\OPT(p)}
\le
\frac{f_{\hat p}(\tilde t)+\eta_{\mathrm{TV}}(b+\tilde t-1)}{\OPT(\hat p)-b\eta_{\mathrm{TV}}}.
\]
Dividing numerator and denominator by $\OPT(\hat p)$ (and assuming $\OPT(\hat p)>b\eta_{\mathrm{TV}}$ so the denominator is positive), we obtain
\[
\CR(p)
\le
\frac{\rho_{\hat p}(\tilde t)+\eta_{\mathrm{TV}}(b+\tilde t-1)/\OPT(\hat p)}
{1-b\eta_{\mathrm{TV}}/\OPT(\hat p)}.
\]

\paragraph{Robustness term is unchanged.}
The robustness guarantee $1+\frac{1}{\lambda}-\frac{1}{b}$ relies only on the distribution-free threshold bound
(Lemma~\ref{lem:dist-free}) and the fact that $\tilde t\in[\lambda b,\, b/\lambda]$ by construction.
It does not depend on how prediction error is measured, hence it remains unchanged under $\eta_{\mathrm{TV}}$.

\section{Proofs of Theoretical Results in Section 5}

\subsection{Proof of Lemma \ref{lem:rc} (Robustness Constraints)}
\begin{proof}
Recall
\[
F(x)=\sum_{t=1}^{x} f(t),\qquad
\mu(x)=\sum_{t=1}^{x} (t-1)f(t),\qquad
\mu(\infty)=\sum_{t=1}^{\infty} (t-1)f(t).
\]
A direct expansion yields
\[
\E_{z\sim f}[C_z(x)]
=\sum_{z\le x} f(z)(b+z-1)+\sum_{z>x} f(z)\,x
=bF(x)+\mu(x)+x\bigl(1-F(x)\bigr)
=\mu(x)+(b-x)F(x)+x.
\]

\paragraph{(\(\Rightarrow\)) Necessity.}
Assume $R$-robustness: $\E[C_Z(x)]\le R\OPT(x)$ for all $x$.

If $1\le x\le b-1$, then $\OPT(x)=x$, hence
\[
\mu(x)+(b-x)F(x)+x \le Rx
\quad\Longleftrightarrow\quad
\mu(x)+(b-x)F(x)\le (R-1)x.
\]

For $x \ge b$, robustness gives $\E[C_Z(x)]\le Rb$ for all $x \ge b$.
Define $g(x):=\E[C_Z(x)]=bF(x)+\mu(x)+x(1-F(x))$.
Since $F(u)\to 1$ and $\mu(u)\to \mu(\infty)$, we have $\lim_{u\to\infty} g(u)=b+\mu(\infty)$.
Thus $b+\mu(\infty)\le Rb$, i.e.,
$\mu(\infty)\le (R-1)b$.

\paragraph{(\(\Leftarrow\)) Sufficiency.}
If $1\le x\le b-1$, we get
\[
\E[C_Z(x)]=\mu(x)+(b-x)F(x)+x\le (R-1)x+x=Rx=R\OPT(x).
\]

For $x \ge b$, Using $F(x+1)=F(x)+f(x+1)$ and $\mu(x+1)=\mu(x)+x f(x+1)$, we obtain
\[
g(x+1)-g(x)=(1-F(x))+(b-1)f(x+1)\ge 0,
\]
so $g(x)$ is nondecreasing in $x$ and therefore $g(x)\le \lim_{u\to\infty} g(u)$.
Therefore,
\[
\E[C_Z(x)]=g(x)\le \lim_{u\to\infty} g(u)=b+\mu(\infty)\le b+(R-1)b=Rb=R\OPT(x).
\]
Hence, the policy is $R$-robust.
\end{proof}

\subsection{Proof of Theorem \ref{thm:optimal-dist}}
\label{proof52}
\begin{proof}
We minimize the linear objective $\sum_{z\ge 1} g(z)\,f(z)$ subject to the robustness constraints
\[
H(x):=\mu(x)+(b-x)F(x)\le (R-1)x \quad (1\le x\le b-1),
\qquad
\mu(\infty)\le (R-1)b,
\]
where $F(x)=\sum_{t=1}^x f(t)$ and $\mu(x)=\sum_{t=1}^x (t-1)f(t)$.

Among all optimal solutions, fix one that minimizes $\mu(\infty)$.

\paragraph{Tightness on the support (for $x\le b-1$).}
We claim that for this chosen optimal solution, $H(x)=(R-1)x$ holds for every $x\le b-1$ such that $f(x+1)>0$.
Suppose, for contradiction, that there exists $x_0<b$ with $f(x_0+1)>0$ and strict slack
\[
H(x_0)<(R-1)x_0.
\]
Since $g$ is monotone increasing, we can shift a small mass from $x_0+1$ to $x_0$ without increasing the objective.
Fix $\varepsilon\in(0,f(x_0+1)]$ and define a new distribution $f_1$ by
\[
f_1(x_0)=f(x_0)+\varepsilon,\qquad
f_1(x_0+1)=f(x_0+1)-\varepsilon,\qquad
f_1(t)=f(t)\ \ (t\notin\{x_0,x_0+1\}).
\]
Let $F_1,\mu_1,H_1$ denote the corresponding quantities under $f_1$.
We verify feasibility.

\emph{Tail constraint ($x \ge b$).}
Only the terms at $t=x_0$ and $t=x_0+1$ change, so
\[
\mu_1(\infty)-\mu(\infty)
=(x_0-1)\varepsilon-x_0\varepsilon
=-\varepsilon<0.
\]
Hence $\mu_1(\infty)\le \mu(\infty)\le (R-1)b$.

\emph{Pointwise constraints ($x\le b-1$).}
For $x<x_0$, both $F(x)$ and $\mu(x)$ are unchanged.
For $x\ge x_0+1$, the CDF is unchanged (mass is moved internally), i.e., $F_1(x)=F(x)$, while
\[
\mu_1(x)-\mu(x)=(x_0-1)\varepsilon-x_0\varepsilon=-\varepsilon,
\]
so $H_1(x)=H(x)-\varepsilon$ becomes strictly smaller and constraints remain satisfied.
It remains to check $x=x_0$.
Here $F_1(x_0)=F(x_0)+\varepsilon$ and $\mu_1(x_0)=\mu(x_0)+(x_0-1)\varepsilon$, hence
\[
H_1(x_0)-H(x_0)
=\bigl(\mu_1(x_0)-\mu(x_0)\bigr)+(b-x_0)\bigl(F_1(x_0)-F(x_0)\bigr)
=(x_0-1)\varepsilon+(b-x_0)\varepsilon
=(b-1)\varepsilon.
\]
Because $(R-1)x_0-H(x_0)>0$, choosing
\[
0<\varepsilon\le \min\!\left\{f(x_0+1),\ \frac{(R-1)x_0-H(x_0)}{b-1}\right\}
\]
ensures $H_1(x_0)\le (R-1)x_0$.
Thus $f_1$ is feasible.

Finally, the objective does not increase:
\[
\sum_{z} g(z)f_1(z)-\sum_{z} g(z)f(z)
=\varepsilon\bigl(g(x_0)-g(x_0+1)\bigr)\le 0.
\]
Hence $f_1$ is also optimal.
If the above inequality is strict, this contradicts the optimality of $f$.
If equality holds, then $\mu_1(\infty)=\mu(\infty)-\varepsilon<\mu(\infty)$, contradicting our choice of $f$ as an optimal solution minimizing $\mu(\infty)$.
Therefore, for the chosen optimal distribution,
\[
\mu(x)+(b-x)F(x)=(R-1)x
\quad\text{for all } 1\le x\le b-1 \text{ such that } f(x+1)>0.
\]

\paragraph{Linear recurrence on the tight region.}
On indices where the constraint is tight at both $x$ and $x+1$,
\[
\mu(x+1)+(b-x-1)F(x+1)-\bigl(\mu(x)+(b-x)F(x)\bigr)=R-1.
\]
Using $\mu(x+1)-\mu(x)=x f(x+1)$ and $F(x+1)-F(x)=f(x+1)$, we obtain
\[
x f(x+1)+(b-x-1)\bigl(F(x)+f(x+1)\bigr)-(b-x)F(x)=R-1,
\]
which simplifies to
\[
(b-1)f(x+1)-F(x)=R-1.
\]
Equivalently,
\[
F(x+1)=F(x)+f(x+1)=\left(1+\frac{1}{b-1}\right)F(x)+\frac{R-1}{b-1}.
\]

\paragraph{Geometric solution and truncation.}
With $F(0)=0$, the recurrence yields
\[
F^*(x)=(R-1)\left[\left(1+\frac{1}{b-1}\right)^x-1\right]
\]
until $F^*(x)$ reaches $1$.
Since $F$ must be a CDF, we truncate at the smallest $x_0$ such that $F^*(x_0)\ge 1$ and place the remaining mass at $x_0$ so that $F(x)=1$ for all $x\ge x_0$.
\end{proof}

\subsection{Proof of Theorem~\ref{thm:optimal-dist-ext}}
\label{proof53}
\begin{proof}
Let $F^\star$ denote the geometric CDF from Theorem~\ref{thm:optimal-dist}, and let
$f^\star$ be its corresponding pmf:
\[
f^\star(t):=F^\star(t)-F^\star(t-1),\qquad F^\star(0):=0.
\]
Write
\[
\gamma:=1+\frac{1}{b-1}=\frac{b}{b-1},
\qquad
F^\star(x)=\min\!\bigl((R-1)(\gamma^x-1),\,1\bigr).
\]
Let
\[
x_0:=\min\{x\in\mathbb{N}:F^\star(x)=1\}.
\]
Equivalently,
\[
x_0
=
\min\{x\in\mathbb{N}:(R-1)(\gamma^x-1)\ge 1\}
=
\left\lceil
\frac{\ln\!\bigl(R/(R-1)\bigr)}{\ln\!\bigl(b/(b-1)\bigr)}
\right\rceil.
\]
Thus $f^\star$ is supported on $\{1,\dots,x_0\}$.

By assumption,
\[
y\ge b-1+\frac{\ln(R/(R-1))}{\ln(b/(b-1))}.
\]
Since $y+1-b$ is an integer, this implies
\[
y+1-b\ge
\left\lceil
\frac{\ln(R/(R-1))}{\ln(b/(b-1))}
\right\rceil
=x_0.
\]
Hence
\begin{equation}
\label{eq:proof53-x0-bound}
x_0\le y+1-b.
\end{equation}

We next show that every index strictly larger than $x_0$ has cost at least $g(x_0)$.
If $x_0<t\le y$, then by the first condition of Theorem~\ref{thm:optimal-dist-ext},
\[
g(t)\ge g(x_0).
\]
If $t\ge y+1$, then by the second condition of Theorem~\ref{thm:optimal-dist-ext} and \eqref{eq:proof53-x0-bound},
\[
g(t)\ge g(y+1-b)\ge g(x_0),
\]
where the last inequality again follows from the first condition of Theorem~\ref{thm:optimal-dist-ext}.
Therefore,
\begin{equation}
\label{eq:proof53-tail-lower}
g(t)\ge g(x_0)\qquad\text{for all }t\ge x_0+1.
\end{equation}

Now define an auxiliary cost function
\[
\widetilde g(t):=
\begin{cases}
g(t), & 1\le t\le x_0,\\[3pt]
g(x_0), & t\ge x_0+1.
\end{cases}
\]
By \eqref{eq:proof53-tail-lower}, we have
\begin{equation}
\label{eq:proof53-gtilde-dominated}
\widetilde g(t)\le g(t)\qquad\forall t\in\mathbb{N}.
\end{equation}
Moreover,
\[
\widetilde g(1)\le \widetilde g(2)\le \cdots,
\]
because $x_0\le y$ and the first condition of Theorem~\ref{thm:optimal-dist-ext} implies
\[
g(1)\le g(2)\le \cdots \le g(x_0),
\]
while $\widetilde g$ is constant on $\{x_0+1,x_0+2,\dots\}$.
Therefore, Theorem~\ref{thm:optimal-dist} implies that $f^\star$ is optimal for
\[
\sum_{t\ge 1}\widetilde g(t)f(t)
\]
over all $R$-robust stopping distributions $f$. Hence,
\begin{equation}
\label{eq:proof53-gtilde-opt}
\sum_{t\ge 1}\widetilde g(t)f(t)
\;\ge\;
\sum_{t\ge 1}\widetilde g(t)f^\star(t)
\qquad
\text{for every $R$-robust }f.
\end{equation}

Since $f^\star$ is supported on $\{1,\dots,x_0\}$ and $\widetilde g(t)=g(t)$ for $t\le x_0$, we have
\begin{equation}
\label{eq:proof53-equality-on-support}
\sum_{t\ge 1}\widetilde g(t)f^\star(t)
=
\sum_{t\ge 1}g(t)f^\star(t).
\end{equation}

We now argue by contradiction. Assume that $F^\star$ is not optimal for the original objective $g$.
Then there exists an $R$-robust stopping distribution $f$ such that
\[
\sum_{t\ge 1} g(t)f(t)
<
\sum_{t\ge 1} g(t)f^\star(t).
\]
But by \eqref{eq:proof53-gtilde-dominated}, \eqref{eq:proof53-gtilde-opt}, and \eqref{eq:proof53-equality-on-support},
\[
\sum_{t\ge 1} g(t)f(t)
\;\ge\;
\sum_{t\ge 1}\widetilde g(t)f(t)
\;\ge\;
\sum_{t\ge 1}\widetilde g(t)f^\star(t)
\;=\;
\sum_{t\ge 1} g(t)f^\star(t),
\]
which is a contradiction. Hence $F^\star$ is optimal for the original objective $g$ as well.

Finally, consider the one-hot prediction case $p(y)=1$. Then
\[
g(t)=
\begin{cases}
b+t-1, & t\le y,\\
y, & t\ge y+1.
\end{cases}
\]
Thus the first condition of Theorem~\ref{thm:optimal-dist-ext} holds trivially. Also, for every $t\ge y+1$,
\[
g(t)=y=b+(y+1-b)-1=g(y+1-b),
\]
so the second condition of Theorem~\ref{thm:optimal-dist-ext} holds as well. Therefore, the theorem applies whenever
\[
y\ge b-1+\frac{\ln(R/(R-1))}{\ln(b/(b-1))}.
\]
\end{proof}

\subsection{Exact solution for the one-hot objective for all $y$}\label{app:onehot-all-y}

In this subsection we fix integers $b>1$ and $y\ge 1$, and consider the
one-hot objective
\[
g(t)=
\begin{cases}
b+t-1, & 1\le t\le y,\\
y, & t\ge y+1.
\end{cases}
\]
We study the linear program
\[
\min_f \sum_{t=1}^{\infty} g(t)f(t)
\qquad
\text{s.t. $f$ satisfies the robustness constraints in Lemma~\ref{lem:rc}.}
\]

Write
\[
F(x):=\sum_{t=1}^x f(t),
\qquad
\mu(x):=\sum_{t=1}^x (t-1)f(t),
\qquad
H(x):=\mu(x)+(b-x)F(x).
\]
Then
\[
H(x)=bF(x)-\sum_{u=1}^x F(u).
\]
Also define
\[
\gamma:=\frac{b}{b-1},
\qquad
G(x):=(R-1)(\gamma^x-1),
\qquad x\in \mathbb{Z}_{\ge 0}.
\]
For each $p\in[0,1]$, define the capped geometric envelope
\[
F_p(x):=\min\{G(x),p\},\qquad x\in\mathbb{Z}_{\ge 0}.
\]

The next theorem gives an exact optimizer for every $y\ge 1$.

\begin{theorem}[Exact optimizer for the one-hot objective for all $y$]
\label{thm:onehot-all-y}
Assume that the feasible set of Lemma~\ref{lem:rc} is nonempty.

\begin{enumerate}
\item[(i)] {\bf The case $1\le y\le b-1$.}
Define
\[
S_y(p):=\sum_{x=1}^y F_p(x),
\]
\[
\Lambda_y(p):=
\gamma^{\,b-y-1}\,
\frac{(R-1)(y+1)+S_y(p)}{b-1}
+
(R-1)\bigl(\gamma^{\,b-y-1}-1\bigr),
\]
and let
\[
p^\star:=\min\{p\in[0,1]:\Lambda_y(p)\ge 1\}.
\]
Then
\[
F^\star(x)=
\begin{cases}
F_{p^\star}(x), & 0\le x\le y,\\[1ex]
\min\!\left\{
\gamma^{\,x-y-1}\,
\frac{(R-1)(y+1)+S_y(p^\star)}{b-1}
+
(R-1)\bigl(\gamma^{\,x-y-1}-1\bigr),
\,1
\right\},
& y+1\le x\le b,\\[2ex]
1, & x\ge b,
\end{cases}
\]
is an optimal CDF.

The minimum objective value is
\[
y+b\,p^\star-S_y(p^\star).
\]

\item[(ii)] {\bf The case $y\ge b$.}
Define
\[
\Phi_y(p):=(y-b)p+\sum_{x=1}^b F_p(x),
\qquad p\in[0,1],
\]
\[
m:=\min\{y-b+1,b\},
\qquad
p_{\mathrm{cheap}}:=\min\{G(m),1\},
\qquad
\Delta_y:=\max\{y-(R-1)b,0\},
\]
and let
\[
p^\star
:=
\min\Bigl\{
p\in[0,1]:
\Phi_y(p)\ge \max\{\Delta_y,\Phi_y(p_{\mathrm{cheap}})\}
\Bigr\}.
\]
Then
\[
F^\star(x)=
\begin{cases}
F_{p^\star}(x), & 0\le x\le b,\\
p^\star, & b+1\le x\le y,\\
1, & x\ge y+1,
\end{cases}
\]
is an optimal CDF.

The minimum objective value is
\[
y+b\,p^\star-\Phi_y(p^\star).
\]
\end{enumerate}

In both cases the corresponding pmf is
\[
f^\star(t)=F^\star(t)-F^\star(t-1),\qquad t\ge 1.
\]
\end{theorem}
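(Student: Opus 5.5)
We treat the problem as a linear program in the CDF $F$ and reduce the objective to a function of a few scalar quantities. Using $f(t)=F(t)-F(t-1)$ and summation by parts, for the one-hot cost we get
\[
\sum_t g(t)f(t)=\sum_{t\le y}(b+t-1)f(t)+y\bigl(1-F(y)\bigr)=y+bF(y)-\sum_{u=1}^{y}F(u),
\]
since $\sum_{t\le y}(t-1)f(t)=\mu(y)=(y-1)F(y)-\sum_{u=1}^{y-1}F(u)$. The objective is therefore $y+bF(y)-\sum_{u\le y}F(u)=y+H(y)$ (when $y\le b-1$). This matches the claimed optimal values, with $p=F(y)$ in case (i) and $p=F(b)$ in case (ii). The plan is: fix the level $p$ of $F$ at the relevant index, maximize $\sum_{u}F(u)$ subject to robustness, and then optimize over $p$.

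\textbf{Step 1: pointwise envelope.} We first show that any feasible $F$ satisfies $F(x)\le G(x)$ for $x\le b-1$. Since $H(x)=bF(x)-\sum_{u\le x}F(u)\le (R-1)x$, induction gives $F(x)\le \frac{(R-1)x+\sum_{u<x}F(u)+F(x)}{b}$, i.e.\ $(b-1)F(x)\le (R-1)x+\sum_{u<x}F(u)$. The extremal solution of this recursion is exactly $G$, by the recurrence derived in the proof of Theorem~\ref{thm:optimal-dist}. Combined with monotonicity $F(u)\le F(y)=p$, this gives $F(u)\le F_p(u)$ for all $u\le y$. Hence $\sum_{u\le y}F(u)\le S_y(p)$ in case (i), and $\sum_{u\le b}F(u)+(y-b)p\le\Phi_y(p)$ in case (ii), where we use $F(u)\le p$ on $(b,y]$ and the decomposition of the objective beyond $b$.

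\textbf{Step 2: which levels $p$ are feasible.} In case (i), the mass $1-p$ must still be placed after $y$ without violating the constraints on $(y,b-1]$ and the tail constraint $\mu(\infty)\le (R-1)b$. Placing it greedily along the tight recurrence $F(x+1)=\gamma F(x)+\frac{R-1}{b-1}$ starting from $H(y)$ produces the formula in the statement. Completing to $1$ by time $b$ is possible iff $\Lambda_y(p)\ge1$. A cost comparison shows that the objective $y+bp-S_y(p)$ is nondecreasing in $p$: its derivative is $b-\#\{u:F_p(u)=p\}\ge b-y>0$. So the smallest feasible $p$, namely $p^\star$, is optimal.

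In case (ii), the tail constraint reads $\mu(\infty)\le (R-1)b$. Putting the remaining mass at $y+1$ contributes $y(1-p)$, which yields a lower bound of the form $\Phi_y(p)\ge \Delta_y$ (after simplifying $\mu(b)=bp-\sum_{u\le b}F_p(u)$). The term $p_{\mathrm{cheap}}$ handles the regime where raising $p$ is free: the derivative $b-(y-b)-\#\{u\le b:F_p(u)=p\}$ can be negative, so up to level $G(m)$ increasing $p$ reduces cost. We will verify monotonicity of the objective in $\Phi_y$ on the relevant range so that the minimal $p$ satisfying both thresholds is optimal.

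\textbf{Step 3: achievability and the main obstacle.} We check that the displayed $F^\star$ is a valid CDF meeting Lemma~\ref{lem:rc} with equality where the argument needs it, so it attains the bound. The main obstacle is case (ii): establishing the correct feasibility characterization and showing that the objective's dependence on $p$ is quasi-convex with its minimum at exactly the stated $p^\star$. We would attack this by writing the objective as $y+bp-\Phi_y(p)$, analyzing the piecewise-linear slope of $\Phi_y$ (namely $(y-b)$ plus the number of capped indices), and separately treating $y-b+1<b$ versus $\ge b$ to justify $m$.
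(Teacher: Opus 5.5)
Your skeleton matches the paper's, and case (i) is essentially the paper's argument. The shared pieces are the identity $\sum_t g(t)f(t)=y+bF(y)-\sum_{u\le y}F(u)$, the envelope $F\le G$ on $\{1,\dots,b-1\}$, the front-loaded cap $F_p$, and a piecewise-linear optimization over the level.

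\textbf{The genuine gap is the one-parameter reduction in case (ii).} With $p:=F(b)$, monotonicity gives $F(u)\ge p$ on $(b,y]$, not $F(u)\le p$. The bound you are implicitly using, cost $\ge y+bF(b)-\Phi_y(F(b))$, is then false for general feasible policies. Take a feasible canonical level-$p$ policy with $p<1$ and $y\ge 2b+1$, and move $\varepsilon$ of its mass from day $y+1$ to a day $t$ with $b<t<y-b+1$. The constraints for $x\le b-1$ and the value $F(b)=p$ are unchanged, and $\mu(\infty)$ drops by $\varepsilon(y+1-t)$, so the policy stays feasible. Yet the cost changes by $\varepsilon(b+t-1-y)<0$.

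Likewise, your tail computation only verifies $\Phi_y(p)\ge\Delta_y$ for policies whose leftover mass sits at $y+1$. It is not a necessary condition in terms of $F(b)$: putting all mass at day $b+1$ is feasible when $R\ge2$, but then $\Phi_y(0)=0<\Delta_y$ once $y>(R-1)b$.

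The missing idea is why mass on $\{b+1,\dots,y\}$ can be discarded, even though it is cheap and uses little tail budget. The paper does this by compression. First it pushes mass above $y+1$ down to $y+1$. Then each $\varepsilon$ at $t\in\{b+1,\dots,y\}$ is split into $\alpha_t\varepsilon$ at $b$ and $(1-\alpha_t)\varepsilon$ at $y+1$, with $\alpha_t=(y+1-t)/(y-b+1)$. This preserves total mass and $\mu(\infty)$, does not touch the constraints for $x\le b-1$, and lowers the cost by $\varepsilon b(t-b)/(y-b+1)$. Only after this is $F$ flat on $[b,y]$ and your computation with $p=F(b)$ valid.

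A simpler repair is to parametrize by $q:=F(y)$ instead. Then $F(u)\le q$ on $(b,y]$ really holds, so every feasible policy costs at least $y+bq-\Phi_y(q)$. Necessity of $\Phi_y(q)\ge\Delta_y$ then follows from $\mu(\infty)\ge\mu(y)+y(1-q)=y-\sum_{u\le y}F(u)\ge y-\Phi_y(q)$, with no compression at all.

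\textbf{Two smaller omissions.} First, you never use the nonemptiness hypothesis, yet it is needed. The pointwise constraints give $F\le G$ only up to $b-1$, whereas your Step 1 asserts $F\le F_p$ up to $y$. What makes $F_p(b)=p$ is $G(b)\ge1$, and you rely on $F_p(b)=p$ in your case (ii) bound, in the slope count, and in the shape of $F^\star$. The paper derives $G(b)\ge1$ from nonemptiness: it compresses a feasible policy onto $\{1,\dots,b\}$ and runs the envelope recursion on $H(b)\le(R-1)b$.

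Second, in case (i), ``completing to $1$ by time $b$'' presupposes that mass beyond $b$ may be moved to day $b$. That move is harmless, since $g\equiv y$ there and $\mu(\infty)$ only decreases, and it is what turns the tail constraint into $H(b)\le(R-1)b$. You should state it. Also, the ``only if'' half of the criterion $\Lambda_y(p)\ge1$ requires showing that every feasible continuation stays below the greedy one, by the same monotone induction as in Step 1. Showing only that the greedy continuation is feasible is not enough.
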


\begin{proof}
We divide the proof into several steps.

\medskip
\noindent
{\bf Step 1: geometric envelope, nonemptiness implies $G(b)\ge 1$, and the front-loaded cap.}
For every $1\le x\le b-1$, the robustness constraint is
\[
H(x)=bF(x)-\sum_{u=1}^x F(u)\le (R-1)x.
\]
Equivalently,
\begin{equation}\label{eq:onehot-envelope-basic}
(b-1)F(x)\le (R-1)x+\sum_{u=1}^{x-1}F(u).
\end{equation}
On the other hand, the function $G(x)=(R-1)(\gamma^x-1)$ satisfies
\[
(b-1)G(x)=(R-1)x+\sum_{u=1}^{x-1}G(u)
\qquad (x\ge 1).
\]
Therefore an induction on $x$ using \eqref{eq:onehot-envelope-basic} gives
\begin{equation}\label{eq:onehot-pointwise-envelope}
F(x)\le G(x)\qquad (1\le x\le b-1)
\end{equation}
for every feasible $F$.

We next record a consequence of the assumption that the feasible set is nonempty.
Take any feasible policy and move all probability mass on $\{t>b\}$ to the single
day $b$. This does not affect the pointwise constraints for $x\le b-1$, and it
can only decrease $\mu(\infty)$. Hence there exists a feasible policy supported
on $\{1,\dots,b\}$. For such a policy, \(F(b)=1\). Since the support is contained in \(\{1,\dots,b\}\),
we also have
\[
\mu(\infty)=\mu(b).
\]
Hence the tail constraint
\[
\mu(\infty)\le (R-1)b
\]
becomes
\[
\mu(b)\le (R-1)b.
\]
Now
\[
H(b)=\mu(b)+(b-b)F(b)=\mu(b),
\]
so this is equivalent to
\[
H(b)\le (R-1)b.
\]
Using
\[
H(b)=bF(b)-\sum_{u=1}^b F(u),
\]
we obtain
\[
bF(b)-\sum_{u=1}^b F(u)\le (R-1)b.
\]
Separating the last term in the sum gives
\[
bF(b)-\sum_{u=1}^{b-1}F(u)-F(b)\le (R-1)b,
\]
that is,
\[
(b-1)F(b)-\sum_{u=1}^{b-1}F(u)\le (R-1)b.
\]
Therefore
\[
(b-1)F(b)\le (R-1)b+\sum_{u=1}^{b-1}F(u).
\]
Using \eqref{eq:onehot-pointwise-envelope} on the right-hand side gives
\[
b-1
\le
(R-1)b+\sum_{u=1}^{b-1}G(u)
=
(b-1)G(b).
\]
Thus
\begin{equation}\label{eq:onehot-Gb-ge-1}
G(b)\ge 1.
\end{equation}
In particular, for every $p\in[0,1]$ we have
\begin{equation}\label{eq:onehot-Fpb}
F_p(b)=p.
\end{equation}

Now fix $p\in[0,1]$.
If a feasible $F$ satisfies $F(y)=p$, then trivially $F(x)\le p$ for $0\le x\le y$.
Combining this with \eqref{eq:onehot-pointwise-envelope} yields
\begin{equation}\label{eq:onehot-frontloaded-domination-y}
F(x)\le F_p(x):=\min\{G(x),p\}
\qquad (0\le x\le y).
\end{equation}
Likewise, if a feasible $F$ satisfies $F(b)=p$, then by \eqref{eq:onehot-pointwise-envelope}
and \eqref{eq:onehot-Fpb},
\begin{equation}\label{eq:onehot-frontloaded-domination-b}
F(x)\le F_p(x)
\qquad (0\le x\le b).
\end{equation}

Finally, the capped envelope $F_p$ itself is feasible for the pointwise
constraints $x\le b-1$. The claim is trivial for $p=0$, so assume $p>0$ and let
\[
k(p):=\min\{x\ge 0:G(x)\ge p\}.
\]
If $k(p)\ge b$, then $F_p(x)=G(x)$ for all $0\le x\le b-1$, so all pointwise
constraints are tight there.

Assume now that $1\le k(p)\le b-1$. For $x<k(p)$ we still have $F_p(x)=G(x)$,
so the constraints are tight there. At $x=k(p)$,
\[
H_p(k(p))
=
bF_p(k(p))-\sum_{u=1}^{k(p)}F_p(u)
=
(b-1)p-\sum_{u=1}^{k(p)-1}G(u)
\]
and since $p\le G(k(p))$,
\[
H_p(k(p))
\le
(b-1)G(k(p))-\sum_{u=1}^{k(p)-1}G(u)
=
(R-1)k(p).
\]
For every $x\ge k(p)$, the CDF is flat:
\[
F_p(x+1)=F_p(x)=p.
\]
Hence
\[
H_p(x+1)-H_p(x)
=
b\bigl(F_p(x+1)-F_p(x)\bigr)-F_p(x+1)
=
-p
\le 0.
\]
Therefore $H_p(x)\le H_p(k(p))\le (R-1)k(p)\le (R-1)x$ for all $x\ge k(p)$.
So $F_p$ is feasible for all pointwise constraints $x\le b-1$.

Thus $F_p$ is the canonical \emph{front-loaded} feasible prefix with endpoint
value $p$.

\medskip
\noindent
{\bf Step 2: proof of part (i), i.e., $1\le y\le b-1$.}

Because $g(t)=y$ for every $t\ge y+1$, moving any mass from a day $t>b$ to the
day $b$ leaves the objective unchanged, does not affect the constraints
$H(x)\le (R-1)x$ for $x\le b-1$, and decreases $\mu(\infty)$. Hence, without
loss of optimality, we may restrict to pmf's supported on $\{1,\dots,b\}$.
Then $F(b)=1$, and the tail constraint becomes
\[
\mu(b)\le (R-1)b.
\]
Using $H(b)=\mu(b)=bF(b)-\sum_{u=1}^bF(u)$, this is equivalently
\begin{equation}\label{eq:onehot-tail-b-form}
(b-1)F(b)\le (R-1)b+\sum_{u=1}^{b-1}F(u).
\end{equation}
This is the same algebraic form as \eqref{eq:onehot-envelope-basic}, but now it
comes from the tail constraint after support compression.

Now fix a feasible policy and write
\[
p:=F(y).
\]
Since $g(t)=y$ for $t\ge y+1$ and the support is contained in $\{1,\dots,b\}$,
\[
\sum_{t=1}^{\infty} g(t)f(t)
=
\sum_{t=1}^{y}(b+t-1)f(t)+y\sum_{t=y+1}^{b}f(t)
=
y+bF(y)-\sum_{x=1}^{y}F(x).
\]
Therefore
\begin{equation}\label{eq:onehot-belowb-objective}
\sum_{t=1}^{\infty} g(t)f(t)
=
y+b\,p-\sum_{x=1}^{y}F(x).
\end{equation}
By \eqref{eq:onehot-frontloaded-domination-y},
\[
\sum_{x=1}^{y}F(x)\le \sum_{x=1}^{y}F_p(x)=S_y(p).
\]
Hence every feasible policy with $F(y)=p$ has objective value at least
\[
y+b\,p-S_y(p),
\]
and equality is achieved exactly when the prefix on $\{0,1,\dots,y\}$ equals
$F_p$.

So the remaining task is to determine for which $p$ this front-loaded prefix can
be completed to a feasible CDF with $F(b)=1$.

\smallskip
\noindent
{\it Maximal continuation from day $y$.}
Starting from the prefix $F_p$ on $\{0,\dots,y\}$, define recursively
\[
U_p(y):=p,
\qquad
U_p(x):=
\frac{(R-1)x+\sum_{u=1}^{x-1}U_p(u)}{b-1},
\qquad y+1\le x\le b,
\]
where the case $x=b$ uses \eqref{eq:onehot-tail-b-form}. Because the right-hand
side is monotone increasing in the earlier cumulative values, a straightforward
induction on $x$ shows that every feasible continuation of the prefix $F_p$
satisfies
\[
F(x)\le U_p(x)\qquad (y+1\le x\le b).
\]
Conversely, choosing equality in the recursion produces the maximal feasible
continuation until the CDF reaches $1$; after that one keeps the CDF equal to
$1$, which only makes all later constraints easier.

At $x=y+1$,
\[
U_p(y+1)=\frac{(R-1)(y+1)+S_y(p)}{b-1}.
\]
For $x\ge y+2$, subtracting the defining equalities for $U_p(x)$ and
$U_p(x-1)$ gives
\[
U_p(x)-U_p(x-1)
=
\frac{(R-1)+U_p(x-1)}{b-1},
\]
that is,
\[
U_p(x)=\gamma\,U_p(x-1)+\frac{R-1}{b-1}.
\]
Solving this recurrence yields
\[
U_p(x)=
\gamma^{\,x-y-1}\,
\frac{(R-1)(y+1)+S_y(p)}{b-1}
+
(R-1)\bigl(\gamma^{\,x-y-1}-1\bigr),
\qquad y+1\le x\le b.
\]
In particular,
\[
\Lambda_y(p)=U_p(b)=
\gamma^{\,b-y-1}\,
\frac{(R-1)(y+1)+S_y(p)}{b-1}
+
(R-1)\bigl(\gamma^{\,b-y-1}-1\bigr).
\]

Therefore a prefix level $p$ is feasible if and only if $\Lambda_y(p)\ge 1$.
Indeed:
if a feasible continuation exists, then necessarily
\[
1=F(b)\le U_p(b)=\Lambda_y(p).
\]
Conversely, if $\Lambda_y(p)\ge 1$, then the maximal continuation reaches $1$
by time $b$, and truncating at the first hitting time yields a feasible CDF
with prefix $F_p$ and total mass $1$.

\smallskip
\noindent
{\it Why the smallest feasible $p$ is optimal.}
Define
\[
\Psi_y(p):=y+b\,p-S_y(p).
\]
The function
\[
S_y(p)=\sum_{x=1}^y \min\{G(x),p\}
\]
is continuous, piecewise linear, and nondecreasing.

More precisely, if
\[
G(k-1)\le p\le G(k)
\qquad\text{for some }1\le k\le y,
\]
then
\[
S_y(p)=\sum_{x=1}^{k-1}G(x)+(y-k+1)p,
\]
and hence
\[
\Psi_y(p)
=
y-\sum_{x=1}^{k-1}G(x)+\bigl(b-y+k-1\bigr)p.
\]
Therefore, on each interval \([G(k-1),G(k)]\), the function \(\Psi_y\) is affine with slope
\[
b-y+k-1>0.
\]

If \(G(y)\le p\le 1\), then \(F_p(x)=G(x)\) for every \(1\le x\le y\), so
\[
S_y(p)=\sum_{x=1}^y G(x),
\]
and hence
\[
\Psi_y(p)=y-\sum_{x=1}^y G(x)+bp,
\]
which is affine with slope \(b>0\) on \([G(y),1]\).

Since \(\Psi_y\) is continuous and has strictly positive slope on every linearity interval, it is strictly increasing on \([0,1]\).

Because the feasible set is nonempty, there exists at least one feasible policy
supported on $\{1,\dots,b\}$; for its value $p=F(y)$ we have just proved
$\Lambda_y(p)\ge 1$. Hence the set
\[
\{p\in[0,1]:\Lambda_y(p)\ge 1\}
\]
is nonempty. Since $\Lambda_y$ is continuous and nondecreasing, the number
\[
p^\star:=\min\{p\in[0,1]:\Lambda_y(p)\ge 1\}
\]
is well defined. Since $\Psi_y$ is strictly increasing, the minimum objective
value is attained exactly at this smallest feasible prefix level $p^\star$, and
the corresponding optimal CDF is the one stated in part~(i).

\medskip
\noindent
{\bf Step 3: proof of part (ii), i.e., $y\ge b$.}

We first compress the support.

\smallskip
\noindent
{\it Compression of the far tail.}
If $t\ge y+2$, then $g(t)=g(y+1)=y$. Moving any mass from $t$ to $y+1$
therefore leaves the objective unchanged, does not affect the constraints
$H(x)\le (R-1)x$ for $x\le b-1$, and decreases $\mu(\infty)$.
So we may assume that there is no mass above $y+1$.

\smallskip
\noindent
{\it Compression of the middle block $\{b+1,\dots,y\}$.}
Fix $t\in\{b+1,\dots,y\}$ and define
\[
\alpha_t:=\frac{y+1-t}{\,y-b+1\,}\in[0,1].
\]
Then
\[
\alpha_t(b-1)+(1-\alpha_t)y=t-1.
\]
Hence replacing an $\varepsilon$-mass at day $t$ by $\alpha_t\varepsilon$ at
day $b$ and $(1-\alpha_t)\varepsilon$ at day $y+1$ preserves total mass and
preserves the contribution to $\mu(\infty)$. Since all three days are $>b-1$,
this replacement does not affect any pointwise constraint for $x\le b-1$.

Its effect on the objective is
\begin{align*}
&\varepsilon\bigl(\alpha_t g(b)+(1-\alpha_t)g(y+1)-g(t)\bigr)\\
&=
\varepsilon\bigl(\alpha_t(2b-1)+(1-\alpha_t)y-(b+t-1)\bigr)\\
&=
-\varepsilon\,\frac{b(t-b)}{y-b+1}
\le 0.
\end{align*}
Therefore some optimal solution is supported on the set
\[
\{1,2,\dots,b,y+1\}.
\]

Now fix such a feasible policy and let
\[
p:=F(b)=\sum_{t=1}^{b}f(t).
\]
For this support pattern,
\[
\mu(\infty)=\sum_{t=1}^{b}(t-1)f(t)+y(1-p).
\]
Thus the tail constraint $\mu(\infty)\le (R-1)b$ is equivalent to
\[
\sum_{t=1}^{b}(y+1-t)f(t)\ge y-(R-1)b.
\]
Since the left-hand side is nonnegative, this is equivalent to
\[
\sum_{t=1}^{b}(y+1-t)f(t)\ge \Delta_y,
\qquad
\Delta_y:=\max\{y-(R-1)b,0\}.
\]

Define the achieved reduction
\[
\mathcal{R}(f):=\sum_{t=1}^{b}(y+1-t)f(t).
\]
Using summation by parts,
\[
\mathcal{R}(f)
=
(y-b)p+\sum_{x=1}^{b}F(x).
\]
By \eqref{eq:onehot-frontloaded-domination-b},
\[
\sum_{x=1}^{b}F(x)\le \sum_{x=1}^{b}F_p(x),
\]
so
\[
\mathcal{R}(f)\le \Phi_y(p),
\qquad
\Phi_y(p):=(y-b)p+\sum_{x=1}^{b}F_p(x).
\]
Moreover, because of \eqref{eq:onehot-Fpb}, the CDF $F_p$ indeed has endpoint
value $F_p(b)=p$. Hence, whenever $\Phi_y(p)\ge \Delta_y$, equality is attained
by taking the front-loaded cap $F_p$ on $\{0,1,\dots,b\}$, then keeping the CDF
flat at level $p$ on $\{b+1,\dots,y\}$, and finally placing the remaining mass
$1-p$ at the single tail point $y+1$.

The objective can be written as
\[
\sum_{t=1}^{\infty}g(t)f(t)
=
y+\sum_{t=1}^{b}(b-y+t-1)f(t)
=
y+b\,p-\mathcal{R}(f).
\]
Therefore every feasible policy with early mass $p$ has objective value at least
\[
y+b\,p-\Phi_y(p),
\]
and equality is attained whenever $\Phi_y(p)\ge \Delta_y$ by the canonical
construction just described.

So the problem reduces to the one-dimensional minimization
\[
\min_{p\in[0,1]}
\Bigl(y+b\,p-\Phi_y(p)\Bigr)
\qquad
\text{s.t. }\Phi_y(p)\ge \Delta_y.
\]

\smallskip
\noindent
{\it Shape of the one-dimensional objective.}
Let
\[
J_y(p):=y+b\,p-\Phi_y(p).
\]
By Step 1, the nonemptiness of the feasible set implies \(G(b)\ge 1\). Since
\(G(0)=0\) and \(G\) is increasing, every \(p\in[0,1]\) belongs to some interval
\[
G(k-1)\le p\le G(k)
\qquad\text{with }1\le k\le b.
\]
Fix such a \(k\). Then
\[
F_p(x)=
\begin{cases}
G(x), & 1\le x\le k-1,\\
p, & k\le x\le b,
\end{cases}
\]
and therefore
\[
\sum_{x=1}^{b}F_p(x)=\sum_{x=1}^{k-1}G(x)+(b-k+1)p.
\]
Hence
\[
\Phi_y(p)
=
(y-b)p+\sum_{x=1}^{k-1}G(x)+(b-k+1)p
=
\sum_{x=1}^{k-1}G(x)+(y-k+1)p,
\]
so on the interval \([G(k-1),G(k)]\),
\[
J_y(p)
=
y-\sum_{x=1}^{k-1}G(x)+\bigl(k-(y-b+1)\bigr)p.
\]
Thus \(J_y\) is affine on each interval \([G(k-1),G(k)]\), with slope
\[
k-(y-b+1).
\]

Now set
\[
m:=\min\{y-b+1,b\},
\qquad
p_{\mathrm{cheap}}:=\min\{G(m),1\}.
\]
Then:

\begin{itemize}
\item if \(1\le k\le m\), the slope \(k-(y-b+1)\le 0\), so \(J_y\) is nonincreasing on \([G(k-1),G(k)]\);
\item if \(m+1\le k\le b\), the slope \(k-(y-b+1)\ge 0\), so \(J_y\) is nondecreasing on \([G(k-1),G(k)]\).
\end{itemize}

Therefore \(J_y\) is nonincreasing on \([0,p_{\mathrm{cheap}}]\) and
nondecreasing on \([p_{\mathrm{cheap}},1]\). Consequently, among all feasible
\(p\) satisfying \(\Phi_y(p)\ge \Delta_y\), the canonical minimizer is the
smallest feasible \(p\) lying at or to the right of \(p_{\mathrm{cheap}}\),
equivalently the smallest \(p\) such that
\[
\Phi_y(p)\ge \max\{\Delta_y,\Phi_y(p_{\mathrm{cheap}})\}.
\]
That is,
\[
p^\star
=
\min\Bigl\{
p\in[0,1]:
\Phi_y(p)\ge \max\{\Delta_y,\Phi_y(p_{\mathrm{cheap}})\}
\Bigr\}.
\]
The corresponding optimal CDF is exactly the one stated in part~(ii).

This completes the proof.
\end{proof}

\begin{remark}[Long-flat-tail regime and consistency with Theorem~\ref{thm:optimal-dist-ext}]
\label{rem:onehot-long-tail}
Assume \(y\ge b\) and
\[
y\ge b-1+\frac{\ln(R/(R-1))}{\ln(b/(b-1))}.
\]
Let
\[
\gamma:=\frac{b}{b-1},
\qquad
m:=\min\{y-b+1,b\},
\qquad
p_{\mathrm{cheap}}:=\min\{G(m),1\}.
\]
Then \(p_{\mathrm{cheap}}=1\). Indeed, if \(m=y-b+1<b\), then the above
assumption implies
\[
m\ge \frac{\ln(R/(R-1))}{\ln\gamma},
\]
hence
\[
\gamma^m\ge \frac{R}{R-1},
\qquad
G(m)=(R-1)(\gamma^m-1)\ge 1.
\]
If instead \(m=b\), then \(G(m)=G(b)\ge 1\) follows from Step~1, namely from the assumption that the feasible set is nonempty.

Therefore \(p_{\mathrm{cheap}}=1\).

Since \(p_{\mathrm{cheap}}=1\), the defining condition becomes
\[
p^\star
=
\min\Bigl\{
p\in[0,1]:
\Phi_y(p)\ge \max\{\Delta_y,\Phi_y(1)\}
\Bigr\}.
\]
Because \(F_1(x)=\min\{G(x),1\}\) is feasible by Step~1 together with the fact
that \(G(b)\ge 1\), we have
\[
\Phi_y(1)\ge \Delta_y,
\]
and hence
\[
p^\star
=
\min\Bigl\{
p\in[0,1]:
\Phi_y(p)\ge \Phi_y(1)
\Bigr\}.
\]
Moreover, \(\Phi_y\) is strictly increasing on \([0,1]\): on each interval
\([G(k-1),G(k)]\) with \(1\le k\le b\),
\[
\Phi_y(p)=\sum_{x=1}^{k-1}G(x)+(y-k+1)p,
\]
whose slope is \(y-k+1\ge y-b+1>0\). Therefore the only \(p\in[0,1]\)
satisfying \(\Phi_y(p)\ge \Phi_y(1)\) is \(p=1\). Hence \(p^\star=1\).

Consequently, the optimal CDF reduces to
\[
F^\star(x)=\min\{G(x),1\}
=
\min\left\{(R-1)\left(\left(\frac{b}{b-1}\right)^x-1\right),\,1\right\},
\qquad x\ge 0.
\]

This is consistent with the optimal CDF in Theorem~\ref{thm:optimal-dist-ext}.
\end{remark}

\section{Algorithm Design Procedure for Water-Filling}
\label{app:waterfilling-design}

This section provides additional intuition and a derivation-oriented view of
Algorithm~\ref{alg:waterfilling} (Water-Filling for Randomized Robust Ski Rental).
The key message is that the algorithm is a discrete analogue of the standard
\emph{water-filling / thresholding} principle for linear programs: for a candidate
water level $h$, we allocate probability mass only on indices with $g(t)\le h$,
and we allocate it \emph{as much as possible} while maintaining $R$-robustness.
The minimum feasible $h$ yields a decent solution.

\subsection{Step 1: Computing the cost function $g$}
\label{app:design-g}

Fix a predicted distribution $p$ over $D\in\mathbb{N}$ and a buy cost $b\in\mathbb{N}$.
For the (deterministic) threshold rule that buys on day $t\in\mathbb{N}$, its
expected cost under $p$ is
\begin{equation}
\label{eq:app-g-def}
g(t)
=
\sum_{d<t} p(d)\,d
\;+\;
\sum_{d\ge t} p(d)\,(b+t-1).
\end{equation}
We will use the structure of $g(\cdot)$ to decide where probability mass should
be placed in the randomized policy.

Let the (finite) support of $p$ be $\{d_1<d_2<\cdots<d_n\}$ with probabilities
$p(d_i)=p_i$.
For any integer $t$ satisfying $d_k < t \le d_{k+1}$ (with the convention $d_0:=0$),
the event $\{D<t\}$ equals $\{d_1,\dots,d_k\}$ and $\{D\ge t\}$ equals $\{d_{k+1},\dots,d_n\}$.
Substituting into \eqref{eq:app-g-def} yields
\begin{align}
g(t)
&=
\sum_{i=1}^{k} p_i d_i
\;+\;
\Bigl(\sum_{j=k+1}^{n} p_j\Bigr)(b+t-1)
\nonumber\\
&=
a_k\, t \;+\; b_k,
\qquad\text{for } d_k < t \le d_{k+1},
\label{eq:app-g-piecewise}
\end{align}
where
\begin{equation}
\label{eq:app-ak-bk}
a_k := \sum_{j=k+1}^{n} p_j = \Pr[D>d_k],
\qquad
b_k := \sum_{i=1}^{k} p_i d_i + (b-1)a_k.
\end{equation}
Hence $g(\cdot)$ is (discrete) piecewise linear with slopes $a_k\in[0,1]$ satisfying
$a_0\ge a_1\ge\cdots\ge a_n=0$.
In particular, $g(t)=t+(b-1)$ for $1\le t\le d_1$, and $g(t)\equiv \E[D]$ for $t>d_n$.

\subsection{Step 2: Why a water level $h$ and why binary search?}
\label{app:design-waterlevel}

We seek a randomized stopping distribution $f$ that minimizes the linear objective
\begin{equation}
\label{eq:app-obj}
\min_{f}\ \sum_{t\ge 1} g(t)\,f(t)
\end{equation}
subject to the $R$-robustness constraints in Lemma~\ref{lem:rc}.
This is a linear program in the variables $\{f(t)\}_{t\ge 1}$.
A standard way to understand optimal solutions of such LPs is through a
\emph{threshold/water-level} principle: for an optimal solution there exists a
scalar $h$ such that (informally) we only place probability mass on indices with
$g(t)$ at or below $h$, and all indices strictly below $h$ are ``filled as much as
possible'' until a constraint becomes tight.

Algorithm~\ref{alg:waterfilling} implements this idea explicitly:
given a candidate level $h$, it constructs the \emph{maximal} feasible allocation of
probability mass on the region $\{t:\ g(t)\le h\}$ (maximal in the sense that any
additional mass within $\{g\le h\}$ would violate robustness).
If this maximal allocation still does not reach total mass $1$, then no feasible
solution can achieve objective value $\le h$, so $h$ is infeasible.
Conversely, if we can reach total mass $1$ while keeping all mass on $\{g\le h\}$,
then $h$ is feasible.
Therefore the optimal objective value equals the minimum feasible $h$, which can be
found by binary search.

Concretely, on a segment where $g(t)=a_k t+b_k$ and $a_k>0$, the set of integer
days with $g(t)\le h$ is an interval.
The last feasible day inside $(d_k,d_{k+1}]$ is
\begin{equation}
\label{eq:app-endpoint}
e_k
=
\min\left\{\left\lfloor\frac{h-b_k}{a_k}\right\rfloor,\ d_{k+1}\right\}.
\end{equation}
Thus, for a fixed $h$, the ``active region'' $\{t:\ g(t)\le h\}$ is a union of
intervals in $t$, and we can fill those intervals from left to right.

\subsection{Step 3: Why the geometric update on $[1,b-1]$?}
\label{app:design-geometric}

The robustness constraints for $x\le b-1$ in Lemma~\ref{lem:rc} are
\begin{equation}
\label{eq:app-robust-x}
\mu(x) + (b-x)F(x) \le (R-1)x,
\qquad
1\le x\le b-1,
\end{equation}
where
\[
F(x)=\sum_{t=1}^{x} f(t),
\qquad
\mu(x)=\sum_{t=1}^{x} (t-1)f(t).
\]
A guiding principle in the water-filling construction is:
\emph{on indices where we actively place mass (i.e., $f(x+1)>0$), the
constraint \eqref{eq:app-robust-x} should be tight.}
If \eqref{eq:app-robust-x} had slack at some $x$ while we still place mass at a
later day, then we could shift a small amount of probability mass from $x+1$ to $x$,
decrease the objective (since $g$ is increasing on active ranges), and maintain
feasibility---contradicting maximality/optimality.
This is the same ``tightness on the support'' phenomenon used in
Theorem~\ref{thm:optimal-dist}.

Assuming tightness at $x$ and $x+1$,
\[
\mu(x) + (b-x)F(x) = (R-1)x,
\qquad
\mu(x+1) + (b-x-1)F(x+1) = (R-1)(x+1),
\]
and using the discrete identities
\[
F(x+1)=F(x)+f(x+1),
\qquad
\mu(x+1)=\mu(x)+x f(x+1),
\]
we obtain the recurrence
\begin{equation}
\label{eq:app-recurrence}
(b-1)\bigl(F(x+1)-F(x)\bigr) - F(x) = R-1,
\qquad (x<b \text{ on the tight region}).
\end{equation}
Equivalently,
\begin{equation}
\label{eq:app-geom-update}
F(x+1) = F(x) + \frac{F(x)+R-1}{b-1},
\end{equation}
which is exactly the geometric-growth update used in Algorithm~\ref{alg:waterfilling}.
Thus, the geometric evolution is not an ad-hoc choice: it is the unique way to keep
\eqref{eq:app-robust-x} tight while adding mass as aggressively as possible.

\subsection{Step 4: Why do we add an atom across gaps?}
\label{app:design-gaps}

Because $g(\cdot)$ is determined by the predicted distribution $p$, the active set
$\{t:\ g(t)\le h\}$ may skip an interval of days.
When there is a gap $(e, d)$ with no active days (so we set $f(t)=0$ for $e<t<d$),
we have $F(t)\equiv F(e)$ and $\mu(t)\equiv\mu(e)$ throughout the gap.
If the next active point is $d$ (typically a support point of $p$ where the slope
of $g$ changes), we can only increase $F$ by placing an \emph{atom} at $d$.

The atom size is determined again by enforcing the tightness of the robustness constraint.
Let $m$ be the probability mass placed at day $d$, so
\[
F(d)=F(e)+m,
\qquad
\mu(d)=\mu(e)+(d-1)m.
\]
Imposing tightness at $x=d$ in \eqref{eq:app-robust-x} gives
\[
\mu(e)+(d-1)m + (b-d)(F(e)+m) = (R-1)d,
\]
and using tightness at $x=e$ (i.e., $\mu(e)+(b-e)F(e)=(R-1)e$) yields
\begin{equation}
\label{eq:app-gap-mass}
m = \frac{(R-1+F(e))(d-e)}{b-1}.
\end{equation}
This is precisely the gap mass update in Algorithm~\ref{alg:waterfilling}.
Intuitively, when the cost level $h$ forces us to skip some days, the only way to
keep accumulating probability mass while respecting robustness is to jump via
atoms at the next admissible indices, and the jump size is uniquely pinned down by
tightness.

\subsection{Step 5: Why a separate tail feasibility test for $x \ge  b$?}
\label{app:design-tail}

Lemma~\ref{lem:rc} shows that for $x \ge b$ the robustness requirement is fully captured by
the single tail moment constraint
\begin{equation}
\label{eq:app-tail-constraint}
\mu(\infty)=\sum_{t\ge 1} (t-1)f(t) \le (R-1)b.
\end{equation}
After the water-filling pass constructs $F$ and $\mu$ on $[1,b-1]$ for a candidate $h$,
we are left with a remaining probability mass
\[
m_{\mathrm{tail}} := 1-F(b-1)
\]
and a remaining moment budget
\[
B_{\mathrm{tail}} := (R-1)b - \mu(b-1).
\]
Any completion of $f$ on $t \ge b$ must satisfy
\begin{equation}
\label{eq:app-tail-budget}
\sum_{t>b} f(t) = m_{\mathrm{tail}},
\qquad
\sum_{t>b} (t-1)f(t) \le B_{\mathrm{tail}},
\qquad
f(t)\ge 0.
\end{equation}
Given $h$, we additionally require that all mass is placed on indices with $g(t)\le h$.
Therefore, $h$ is feasible if and only if there exists at least one tail day $t \ge b$ with
$g(t)\le h$ that is compatible with the moment budget \eqref{eq:app-tail-budget}.
In particular, if we place all remaining mass at a single day $t$ (which is optimal for
feasibility because it minimizes the required moment among choices with fixed support),
the moment constraint becomes
\[
m_{\mathrm{tail}}(t-1)\le B_{\mathrm{tail}}
\quad\Longleftrightarrow\quad
t \le 1 + \frac{B_{\mathrm{tail}}}{m_{\mathrm{tail}}}.
\]
Thus, a simple sufficient-and-necessary feasibility check is:
\begin{equation}
\label{eq:app-tail-feasible}
\exists\, t>b \text{ with } g(t)\le h
\ \text{and}\
t \le 1 + \frac{(R-1)b-\mu(b-1)}{1-F(b-1)}.
\end{equation}
This is the tail test used in Algorithm~\ref{alg:waterfilling} (up to the equivalent
indexing convention for $(t-1)$ in $\mu$).
When \eqref{eq:app-tail-feasible} holds, we can complete the distribution by assigning
the remaining mass to an admissible tail point within the budget; otherwise, no
allocation restricted to $\{g\le h\}$ can satisfy robustness, and we must increase $h$.

\subsection{Summary of the design}
\label{app:design-summary}

Putting the pieces together, Algorithm~\ref{alg:waterfilling} follows a consistent logic:
\begin{enumerate}
    \item Compute the piecewise linear structure of $g(\cdot)$ (Step~\ref{app:design-g}).
    \item For a candidate water level $h$, fill mass on $\{g\le h\}$ as much as possible on $[1,b-1]$
    by keeping robustness constraints tight, which forces the geometric update \eqref{eq:app-geom-update}
    and the atom rule \eqref{eq:app-gap-mass} (Steps~\ref{app:design-geometric}--\ref{app:design-gaps}).
    \item Check whether the remaining mass can be placed in the tail without violating the
    moment budget \eqref{eq:app-tail-constraint} while staying in $\{g\le h\}$
    (Step~\ref{app:design-tail}).
    \item Binary search on $h$ to find the minimum feasible level, which yields an optimal robust policy
    minimizing $\sum_t g(t)f(t)$ under the predicted distribution.
\end{enumerate}
This provides a principled rationale for each component of the algorithm: the
water level $h$ represents the optimal objective value, tightness enforces maximal
allocation under robustness, and the tail constraint reduces to a one-dimensional
moment feasibility check.

\subsection{Implementation-Level Description of Water-Filling Algorithm}
\label{immm}

\begin{algorithm}[H]
\caption{Water-Filling for Randomized Robust Ski Rental}
\begin{algorithmic}[1]
\REQUIRE Prediction $p$ supported on $\{d_1, \dots, d_n\}$, buy cost $b$, robustness $R$, tolerance $\epsilon$
\ENSURE PMF $f$ (and CDF $F$)
\STATE Compute piecewise linear cost function $g(t) = a_k t + b_k$ for segments $(d_k, d_{k+1}]$.
\STATE Initialize bounds $h_{\min} \gets 0$, $h_{\max} \gets \max(g(t))$, and optimal level $h^* \gets h_{\max}$.
\WHILE{$h_{\max} - h_{\min} > \epsilon$}
    \STATE Set candidate water level $h \gets (h_{\min} + h_{\max}) / 2$.
    \STATE Initialize $F \gets 0$, $\mu \gets 0$, $\text{feasible} \gets \text{FALSE}$.
    \FOR{each segment $k$ where interval starts before $b$}
        \STATE Find active range $[s, e]$ in $(d_k, d_{k+1}]$ where $g(t) \le h$:
        \[
            e = \min \left\{ \left\lfloor \frac{h-b_k}{a_k} \right\rfloor,\ d_{k+1},\ b \right\}.
        \]
        \IF{$s \le e$}
            \STATE \textbf{1. Atom Update (Gap Filling):} Check slack at start $s$.
            \IF{$(R-1)s > \mu + (b-s)F$}
                \STATE Add mass $m = \frac{(R-1)s - (\mu + (b-s)F)}{b-1}$ to $F$.
                \STATE Update $\mu \gets \mu + m \cdot s$.
            \ENDIF
            \STATE \textbf{2. Geometric Fill:} Saturate constraints on $(s, e]$.
            \STATE Update $F$ using closed-form geometric series:
            \[
                F \gets (F + R - 1)\left(1 + \frac{1}{b-1}\right)^{e-s} - (R-1).
            \]
            \STATE Update $\mu \gets (R-1)e - (b-e)F$.
        \ENDIF
        \IF{$F \ge 1$}
            \STATE $\text{feasible} \gets \text{TRUE}$.
            \STATE \textbf{break} loop.
        \ENDIF
    \ENDFOR
    \IF{$\text{feasible}$ is FALSE}
        \STATE \textbf{3. Tail Feasibility ($x \ge b$):}
        \STATE Calculate remaining budget $B_{\text{tail}} = (R-1)b - \mu$.
        \IF{$\exists d_j \ge b$ such that $g(d_j) \le h$ and $d_j \le \frac{B_{\text{tail}}}{1-F}$}
            \STATE $\text{feasible} \gets \text{TRUE}$.
        \ENDIF
    \ENDIF
    \STATE \textbf{4. Binary Search Update:}
    \IF{$\text{feasible}$ is TRUE}
        \STATE $h^* \gets h$, $h_{\max} \gets h$ \COMMENT{Feasible, try lower cost}
    \ELSE
        \STATE $h_{\min} \gets h$ \COMMENT{Infeasible, need higher budget}
    \ENDIF
\ENDWHILE
\STATE \textbf{return} Distribution constructed with $h^*$.
\end{algorithmic}
\label{alg:waterfillingimple}
\end{algorithm}

\newpage

\subsection{Time Complexity Analysis}
\label{subsec:waterfilling_time}

Let $n \coloneqq |\mathrm{supp}(p)|$ where $\mathrm{supp}(p)=\{d_1,\cdots,d_n\}$ with $d_1<\cdots<d_n$, and let $K$ denote the number of piecewise-linear segments of $g(t)$ (here $K=n-1$ for intervals $(d_k,d_{k+1}]$).

\paragraph{Precomputation.}
Constructing the segment coefficients $\{(a_k,b_k)\}_{k=1}^{K}$ and computing $h_{\max}=\max_t g(t)$ over relevant endpoints takes $O(K)=O(n)$ time and $O(K)=O(n)$ space.

\paragraph{Cost of a feasibility check (inner loop).}
For a fixed candidate level $h$, the algorithm scans segments whose intervals start before the buy cost $b$.
Each visited segment performs only $O(1)$ arithmetic operations:
(i) computing the active endpoint
$e=\min\{\lfloor (h-b_k)/a_k\rfloor,\, d_{k+1},\, b\}$,
(ii) a constant-time slack/atom update,
and (iii) a constant-time \emph{closed-form} geometric update for $F$ (hence no dependence on the interval length $e-s$ beyond exponentiation).
Thus, the segment scan costs $O(K)=O(n)$ in the worst case, with early stopping when $F\ge 1$.

The tail feasibility check ($x\ge b$) can be implemented in $O(n)$ time by scanning the remaining support points $\{d_j\ge b\}$ and testing the two conditions
$g(d_j)\le h$ and $d_j \le B_{\text{tail}}/(1-F)$.
(With additional preprocessing (e.g., sorting tail points by $g(d_j)$ and maintaining a pointer), this step can be reduced to $O(\log n)$ per check, but the overall worst-case bound below remains dominated by the segment scan.)

Therefore, each feasibility test runs in
\[
T_{\mathrm{check}}(n)=O(n)\quad\text{time},\qquad
S_{\mathrm{check}}(n)=O(1)\ \text{extra space (beyond stored segments/support)}.
\]

\paragraph{Binary search iterations.}
The outer loop is a standard binary search over $h\in[h_{\min},h_{\max}]$ with tolerance $\epsilon$, requiring
\[
I \;=\; \left\lceil \log_2\!\left(\frac{h_{\max}-h_{\min}}{\epsilon}\right)\right\rceil
\]
iterations.

\paragraph{Total complexity.}
Combining the above, the overall running time is
\[
T_{\mathrm{total}}(n,\epsilon)
\;=\; O(n) \;+\; I\cdot O(n)
\;=\; O\!\left(n \log\!\frac{h_{\max}-h_{\min}}{\epsilon}\right) = O\!\left(n \log\!\frac{ \max(g(t)) }{\epsilon}\right).
\]
and the memory usage is $O(n)$ to store the support and segment coefficients (plus $O(1)$ working memory during the checks).

\end{document}